\newcommand{\OPT}{\mathsf{OPT}}
\providecommand{\coloneqq}{\mathrel{\mathop:}=}
\begin{document}

\title{\fontsize{16pt}{19.2pt}\selectfont \bf
   Optimistic Reinforcement Learning with Quantile Objectives
}

\author{\fontsize{12pt}{14.4pt}
    Mohammad Alipour-Vaezi\textsuperscript{a}, Huaiyang Zhong\textsuperscript{a}, Kwok-Leung Tsui\textsuperscript{b}, Sajad Khodadadian\textsuperscript{a,}\thanks{Corresponding Author
    \\Email Addresses: alipourvaezi@vt.edu (M. Alipour-Vaezi); hzhong@vt.edu (H. Zhong); kltsui@vt.edu (K.L. Tsui); sajadk@vt.edu (S. Khodadadian)\\
    ORCID IDs: 0000-0002-7529-1848 (M. Alipour-Vaezi); 0000-0002-2902-1644 (H. Zhong); 0000-0002-0558-2279 (K.L. Tsui); 0000-0002-5197-4652 (S. Khodadadian)}
}

\date{
    \textsuperscript{a}Grado Department of Industrial \& Systems Engineering, Virginia Tech, Blacksburg, VA 24061, USA\\
    \textsuperscript{b}Department of Industrial, Manufacturing, and Systems Engineering, University of Texas at Arlington, Arlington, TX 76019, USA
}

\maketitle

\begin{abstract}
\noindent
 Reinforcement Learning (RL) has achieved tremendous success in recent years. However, the classical foundations of RL do not account for the risk sensitivity of the objective function, which is critical in various fields, including healthcare and finance. A popular approach to incorporate risk sensitivity is to optimize a specific quantile of the cumulative reward distribution. In this paper, we develop \textsc{UCB--QRL}, an optimistic learning algorithm for the $\tau$-quantile objective in finite-horizon Markov decision processes (MDPs). \textsc{UCB--QRL} is an iterative algorithm in which, at each iteration, we first estimate the underlying transition probability and then optimize the quantile value function over a confidence ball around this estimate. We show that \textsc{UCB--QRL} yields a high-probability regret bound $\mathcal O\left((2/\kappa)^{H+1}H\sqrt{SATH\log(2SATH/\delta)}\right)$ in the episodic setting with $S$ states, $A$ actions, $T$ episodes, and $H$ horizons. Here, $\kappa>0$ is a problem-dependent constant that captures the sensitivity of the underlying MDP's quantile value.
\end{abstract}

\noindent\textbf{Keywords:} Reinforcement Learning; Risk-Sensitive Control; Quantile Markov Decision Process; Optimism in the Face of Uncertainty; Regret Analysis.

\spacingset{1.5}

\section{Introduction}
\par Reinforcement learning (RL) provides a general framework for sequential decision making by learning policies through interaction with an unknown environment \parencite{sutton1998reinforcement}. Over the past decade, RL—often coupled with powerful function approximators such as deep neural networks, linear models, splines, and even quantum circuits—has revolutionized our ability to solve complex, high-dimensional decision-making problems. This synergy has enabled RL agents to achieve superhuman performance in games, competitive results in robotic control and locomotion, and large-scale deployment in recommendation systems and operations research \parencite{Elfwing2017Sigmoid-Weighted,busoniu2017reinforcement,Shakya2023Reinforcement,Zhang2020A}. Despite these advances, classical RL methods face an important limitation: they typically optimize expected return and are therefore risk–neutral \parencite{Moos2022Robust}. This highlights the need for formulations that account for the variability and reliability of returns while maintaining data efficiency and principled exploration. \par \emph{Upper Confidence Bound (UCB)} RL operationalizes optimism in the face of uncertainty. At the beginning of each episode, the learner forms confidence regions around the empirical transition model, plans in the most favorable (optimistic) Markov Decision Process (MDP) within those regions, and executes the resulting greedy policy. This plan–act–learn loop achieves near–minimax regret for finite episodic MDPs under the \emph{expectation} objective \parencite{azar2017minimax,auer2008near}.

\par Classical UCB methods are intrinsically \emph{risk–neutral}: they optimize expected return \parencite{liu2020risk}. In safety–critical control, service-level guarantees, and finance, tail performance (e.g., high-percentile delivery time or loss) is paramount \parencite{yang2023safety}. Quantile or Value-at-Risk (VaR)-based objectives capture such requirements directly. The Quantile Markov Decision Process (QMDP) furnishes a backward dynamic program for $\tau$–quantile values, where the quantile operator is defined as
$Q_\tau(X)\coloneqq \inf\{x\in\mathbb R:\ \mathbb P(X\le x)\ge \tau\}$ for the (left-continuous) $\tau$–quantile, 
via an operator on next-step quantile value maps \parencite{li2022quantile}. 
\par A central obstacle in applying the quantile objective in RL is analytical: quantiles are
nonlinear and can change abruptly with small distributional perturbations. In
contrast to expectation operators, the quantile function lacks smoothness and
convexity; therefore, classical optimism analyzes that linearize value differences fail.
We overcome this by (i) expressing the one–step QMDP backup as the quantile of a
\emph{continuation–mixture} random variable, and (ii) proving a sharp, local
Lipschitz property of this mapping under a benign \emph{quantile
margin} at level $\tau$. Informally, if the CDF of the continuation–mixture has
a jump of size at least $\kappa$, then the $\tau$–quantile
is $(2/\kappa)$–Lipschitz with respect to the 1-Wasserstein distance. Coupled
with a TV$\!\to W_1$ bound for mixtures and a coupling argument that uses a
single auxiliary uniform variable to “align’’ next-state randomness, this yields
a clean propagation inequality that is directly analogous to expectation-based
analyzes with explicit $\kappa$–dependence. 
\paragraph{Contributions.}
\begin{enumerate}
\item \textbf{Algorithmic framework.} We introduce \textsc{UCB--QRL}, an optimism-based learning algorithm for quantile objectives in finite-horizon MDPs. Each episode estimates the transition model, builds an $\ell_1$ confidence set, and plans in the most favorable model via a quantile-aware planner.

\item \textbf{High-probability regret guarantees.} We study the convergence of \textsc{UCB--QRL} algorithm and establish a high-probability regret bound of order $\mathcal O\left((2/\kappa)^{H+1}H\sqrt{SATH\log(2SATH/\delta)}\right)$ in the episodic setting, where $\kappa$ is a problem-dependent constant that captures the sensitivity of the underlying MDP's quantile value.
\end{enumerate}

\section{Related Work}
\subsection{Risk-Sensitive MDPs}\label{subsec:rsmdp} 
Risk-sensitive objectives in sequential decision making have been studied under several paradigms. Early work on percentile/quantile criteria analyzed existence, structure, and computation in controlled Markov processes with known dynamics, including shortest-path and service-level formulations \parencite{filar1995percentile,delage2010percentile}. The Quantile MDP (QMDP) framework formalizes dynamic programming for fixed quantile levels and establishes a backward recursion and planning algorithms under known kernels \parencite{li2022quantile}. Closely related, distributional RL propagates the full return distribution and has yielded practical quantile parameterizations such as QR-DQN and IQN \parencite{bellemare2017distributional,dabney2018distributional,dabney2018implicit,Rowland2018An,yang2019fully}. While distributional RL methods typically optimize \emph{expectation}, their estimators provide tools for learning quantile slices.

Beyond quantiles, classical risk-sensitive control optimizes exponential-utility (entropic) criteria, leading to modified Bellman equations and dynamic consistency \parencite{howard1972risk}. Mean–variance MDPs study return–variance trade-offs but face time-inconsistency without special structure \parencite{sobel1982variance,mannor2011mean,guo2012mean}. Coherent risk measures—especially Conditional Value-at-Risk (CVaR) \parencite{rockafellar2000optimization,ROCKAFELLAR20021443}—enable convex surrogates and have been widely explored in RL via value-based, policy-gradient, and actor–critic methods as either objectives or constraints \parencite{chow2014algorithms,tamar2015optimizing,prashanth2014policy}. Constrained MDPs (CMDPs) and safe RL incorporate chance- or CVaR-type constraints using Lagrangian, primal–dual, or Lyapunov approaches \parencite{altman2021constrained,Chow2015Risk-Constrained,Zhang2024CVaR-Constrained,M2022Approximate,Ahmadi2020Constrained}. These lines of research are largely complementary to our setting, which \emph{maximizes a fixed quantile objective} rather than enforcing it as a constraint, and thus requires handling the non-smooth, set-valued nature of the quantile backup itself. Methodologically, quantile regression \parencite{koenker1978regression} underlies many practical estimators used by distributional/quantile RL, but most of this literature does not address online regret with unknown transitions \parencite{dabney2018distributional,dabney2018implicit,yang2019fully}.

\subsection{Optimism and Upper Confidence Bounds (UCB)}\label{subsec:ucb}
Optimism in the face of uncertainty provides near-minimax regret guarantees for expectation-maximizing RL by planning in confidence sets built around empirical transition models. In average-reward communicating MDPs, the UCRL2 algorithm achieves $\tilde{\mathcal O}(D S \sqrt{A T})$-type guarantees via $\ell_1$ confidence sets and Extended Value Iteration (EVI) \parencite{jaksch2010near}. In finite-horizon problems, UCBVI algorithm attains $\tilde{\mathcal O}(H\sqrt{S A T})$ with Hoeffding bonuses and $\tilde{\mathcal O}(\sqrt{H S A T})$ with Bernstein bonuses \parencite{azar2017minimax}. Robust and distributionally robust MDPs planning against uncertainty sets at decision time, yielding max–min or ambiguity-aware backups that are algorithmically akin to optimistic EVI subroutines \parencite{Yu2015Distributionally,Goyal2022Robust,Xu2016Quantile,deo2025design}. 

Adapting optimism to \emph{nonlinear}, tail-focused criteria poses additional challenges: quantile objectives are non-smooth and can change abruptly under small distributional perturbations, so linear value-difference decompositions used for expectation do not directly apply. In one-step settings, bandit studies have designed risk-aware indices for VaR/CVaR and general risk measures \parencite{sani2012risk,galichet2013exploration,cassel2023general}, clarifying how confidence design must reflect tail sensitivity. Extending these ideas to MDPs requires new contraction/sensitivity tools for the backup operator. 

\section{Preliminaries}\label{sec:prelims}
We consider a finite-horizon MDP \(\mathcal{M} = (\mathcal{S},  \mathcal{A}, H, P^\star, r)\), where \(\mathcal{S}\) is the state space, \(\mathcal{A}\) is the action space, \(H\) denotes the horizon, \(P^\star_h(\cdot \mid s,a)\) represents the true transition kernel, and \(r_h(s,a)\in[0,1]\ \) is the reward function for all states \(s\in\mathcal{S}\), actions \(a\in\mathcal{A}\), and horizons $h\in\{0,1,\dots,H-1\}$.
We assume a finite-dimensional state-action space, and we denote $S=|\mathcal{S}|$ and $A=|\mathcal{A}|$.

For a policy $\pi$, transition kernel $P$, horizon step $h\in\{0,\dots,H-1\}$, and  any quantile level $q\in(0,1)$, the $q$--quantile value at state $s$ is defined as
\begin{align}
V_{q,h}^{\pi,P}(s)
\coloneqq 
\inf\!\Bigg\{x\in\mathbb{R}:\;
\mathbb P\!\Biggl(
\sum_{k=h}^{H-1} r_k(S_k, A_k) \le x
\ \Big| 
S_h=s,\ 
A_k=\pi_k(S_k),\ 
S_{k+1}\sim P_k(\cdot\mid S_k,A_k), 
k\geq h \Biggr) \ge q \Bigg\}.
\end{align}
Throughout the paper, we fix a target quantile level $\tau\in(0,1)$. Our learning goal is to solve the following optimization problem
\begin{equation}
\label{eq:opti-goal}
\max_{\pi}\ V^{\pi,P^\star}_{\tau,0}(\bar s),
\end{equation}
where each episode starts from the fixed initial state $\bar s$ (i.e., $S_0^t=\bar s$ for all $t$). We denote by $\pi^\star$ the maximizer in  \eqref{eq:opti-goal} and consider $V^\star_{\tau,0}\equiv V^{\pi^\star,P^\star}_{\tau,0}$.
\section{Finite Horizon UCB--QRL}\label{sec:finite}
\par This section provides a detailed introduction to \textsc{UCB--QRL} algorithm. We first impose an assumption on the underlying MDP in order to control the sensitivity of the quantile backup. 
\begin{definition}[Continuation–mixture]\label{def:Z}
Fix a step $h$, a state–action $(s,a)$, a policy sequence $\pi$, and a transition kernel $P$.
Let $p\coloneqq P_h(\cdot\mid s,a)\in\Delta^S$, and define $p_i\coloneqq P_h(s_i\mid s,a)$. We define the continuation–mixture random variable $Z_{s,a,h}(p;V^{\pi,P}_{\cdot,h+1})$ such that for any $x\in \mathbb{R}$
\begin{align*}
\mathbb P \Big(Z_{s,a,h}\big(p;V^{\pi,P}_{\cdot,h+1}\big)\le x\Big)
=\sum_{i=1}^S p_i\,\phi_i(x),
\qquad
\phi_i(x)\coloneqq \sup \Big\{q\in[0,1]:\, V^{\pi,P}_{q,h+1}(s_i)\leq x\Big\}.
\end{align*}
\end{definition}
\par Intuitively, the continuation–mixture bundles next-state quantile value map into a single scalar random variable whose $q$–quantile equals the one-step backup. This reduction allows us to control quantile sensitivity through transportation distances on distributions rather than directly on set-valued quantile correspondences.
\par Quantile backups are non-smooth and may change discontinuously with small perturbations of the transition law, which prevents the standard expectation-based linearization used in UCB analyzes. To obtain a tractable stability estimate, we assume a jump (“margin”) at the operative quantile of the continuation–mixture (Definition~\ref {def:Z}).

\begin{assumption}[Uniform quantile margin]\label{ass:margin}
For each $(h,s,a)$ and any deterministic policy $\pi$ consider the continuation-mixture
$Z_{s,a,h}\!\bigl(P^\star_h(\cdot\mid s,a); V_{\cdot,h+1}^{\pi,P^\star}\bigr)$. There exists
$\kappa\in(0,1]$ such that \emph{for every} $q\in[0,1]$,
\begin{align*}
\mathbb P \Big(Z_{s,a,h}(P_h^\star(\cdot|s,a); V_{\cdot,h+1}^{\pi,P^\star})\le c^\star_q\Big)
\;-\; \lim_{\epsilon\downarrow 0}\mathbb P \Big(Z_{s,a,h}(P_h^\star(\cdot|s,a); V_{\cdot,h+1}^{\pi,P^\star})\le c^\star_q-\epsilon\Big)
\ \ge\ \kappa.    
\end{align*}
where
$c_q^\star\coloneqq Q_q\!\big(Z_{s,a,h}(P^\star_h(\cdot\mid s,a); V_{\cdot,h+1}^{\pi,P^\star})\big)$.
\end{assumption}
\par Since the state and action sets are finite and $H<\infty$, the collection of deterministic (time–dependent) policies is finite. With deterministic rewards and finite $H$, the continuation–mixture $Z_{s,a,h}\!\bigl(P^\star_h(\cdot\mid s,a); V_{\cdot,h+1}^{\pi,P^\star}\bigr)$ has finite support for every $(s,a,h,\pi)$, so its CDF has a positive jump at each support point. Taking the minimum jump over this finite family yields a uniform margin $\kappa\in(0,1]$, so Assumption~\ref{ass:margin} is satisfied.

\par We begin by fixing a designated start state $\bar s\in\mathcal S$. Each episode $t \in \{0,1,\dots,T-1\}$ starts at $S^t_0 = \bar s$, and within each episode, steps are indexed by $h\in\{0,\ldots,H-1\}$. 
\par Moreover, let $N_h^t(s,a)$ and $N_h^t(s,a,s')$ denote visit and transition counts up to (but excluding) episode $t$.
\par Next, consider a fixed confidence level $\delta\in(0,1)$. Let $T$ denote the number of episodes and $H$ the number of horizons.

We introduce a universal constant
\[
c\ \ge\ \frac{\max\!\left\{2,\sqrt{2\log\!\left( \frac{SATH(2^S-2)}{\delta}\right)}\right\}}
                {\sqrt{\log\!\frac{2SATH}{\delta}}},
\]
and define the confidence radius as
\begin{equation}
\label{eq:confset}
f_\delta(n)=c\sqrt{\frac{\log\!\frac{2SATH}{\delta}}{\max\{1,n\}}}.
\end{equation}

Using this radius, we form an empirical confidence set 
\begin{align*}
\mathcal C^{t}_{\delta}\coloneqq \Bigl\{P:\,
\|P_h(\cdot|s,a)-\widehat P^{t}_h(\cdot|s,a)\|_1 \le f_\delta(N^{t}_h(s,a)), \ \forall s,a,h\Bigr\},
\end{align*}

\par This set contains all transition kernels that are statistically plausible given the data observed up to episode $t$. On the global confidence event $\mathcal{E}_\delta$, which is defined for all $(s,a,h,t)$ as 
\[\mathcal{E}_\delta\!:\ \|P_h^\star(\cdot\mid s,a)-\widehat P_h^{t}(\cdot\mid s,a)\|_1\le f_\delta\!\big(N_h^{t}(s,a)\big),\] 
we have $P^\star\in\mathcal{C}^t_\delta$ simultaneously for all $t$.

\par In parallel, $\forall\,q\in[0,1]$ we define the set of models that respect the quantile margin assumption (Assumption~\ref{ass:margin}):
\begin{align*}
\mathcal C_\kappa \coloneqq \Bigl\{P:\mathbb P\!\Big(Z_{s,a,h}(P_h(\cdot|s,a); V_{\cdot,h+1}^{\pi,P})\le c_{q,s,a,h}^{\pi,P}\Big)
\;-\; \lim_{\epsilon\downarrow 0}\mathbb P\!\Big(Z_{s,a,h}(P_h(\cdot|s,a); V_{\cdot,h+1}^{\pi,P})\le c_{q,s,a,h}^{\pi,P}-\epsilon\Big)
\ \ge\ \kappa, \\
\forall s,a,h, \text{deterministic } \pi \Bigr\},
\end{align*}
where
$c_{q,s,a,h}^{\pi,P}\coloneqq Q_q\!\big(Z_{s,a,h}(P_h(\cdot\mid s,a); V_{\cdot,h+1}^{\pi,P})\big)$.

By construction, the true kernel $P^\star$ lies in $\mathcal C_\kappa$, since it satisfies the margin condition by Assumption~\ref{ass:margin}. With high probability, it also belongs to $\mathcal C^t_\delta$ for all $t$. Thus, $P^\star \in \mathcal C^t_\delta \cap \mathcal C_\kappa$ with high probability.
\par These two sets together form the foundation of our learning algorithm, \textsc{UCB--QRL}, which adopts the optimism-in-the-face-of-uncertainty principle. 
\par At the start of episode $t$, we (i) form an $\ell_1$ confidence region $\mathcal C_\delta^{t+1}$ around the empirical kernel using the radius in Equation~\eqref{eq:confset}; (ii) intersect it with the margin-respecting models $\mathcal C_\kappa$; and (iii) \emph{plan optimistically} over the intersection $\mathcal{\bar C}^{t+1}_{\delta,\kappa} \coloneqq \mathcal C_\delta^{t+1}\cap\mathcal C_\kappa$ to obtain a policy–model pair $(\pi^{t+1},P^{t+1})$ that maximizes the $\tau$–quantile value at the start state. This “estimate $\to$ certify $\to$ plan’’ structure mirrors UCB in expectation-based RL, but the planner is \emph{quantile-aware}.

\begin{algorithm}[H]
\caption{\textsc{UCB--QRL}}
\label{alg:UCB--QRL}
\begin{algorithmic}[1]
\State \textbf{Input:} quantile level $\tau\!\in\!(0,1)$, confidence level $\delta\!\in\!(0,1)$ 
\State \textbf{Initialize:} counts $N^0_{h}(s,a,s')\!\gets\!0$ for all $h$ and $(s,a,s')$; choose any policy $\pi^0$
\For{$t=0,1,\dots,T-1$} 
  \State \textit{Start:} $S_0^{t}\gets \bar s$ \label{alg:UCB--QRL:L1}
  \State \textit{Roll out under $\pi^t$:} generate $\bigl(S_h^{t},A_h^{t},S_{h+1}^{t}\bigr)_{h=0}^{H-1}$
  \State \textit{Update per–step counts:} for each $h$,
  \begin{align*}
      N_h^{t+1}(s,a,s') &=\sum_{i=0}^{t} \mathbbm{1}\!\left\{\, S^i_{h}=s, A^i_{h}=a, S^i_{h+1}=s' \right\}\\
      N_h^{t+1}(s,a) &=\sum_{i=0}^{t} \mathbbm{1}\!\left\{\, S^i_{h}=s, A^i_{h}=a \right\}.
  \end{align*}
    
  \State \textit{Update empirical model:}
  \begin{align*}
      \widehat P^{t+1}_h(s'|s,a)\gets\dfrac{N^{t+1}_h(s,a,s')}{\max\{1,N^{t+1}_h(s,a)\}}
  \end{align*}
  \State \textit{Build confidence sets:} 
         \[\mathcal {\bar C}^{t+1}_{\delta,\kappa}\gets \mathcal{C}^{t+1}_{\delta}\cap \mathcal C_\kappa\]
  \State \textit{Optimistic re-planning} 
        \begin{align*}
        & \pi^{t+1}\ \in\ \arg\max_{\pi}\ \max_{P\in \mathcal {\bar C}^{t+1}_{\delta,\kappa}}\ V^{\pi,P}_{\tau,0}, \quad \text{and set } P^{t+1}\ \text{be any maximizer in } \mathcal {\bar C}^{t+1}_{\delta,\kappa}.
        \end{align*} \label{alg:line9}
\EndFor
\end{algorithmic}
\end{algorithm}

\par In Algorithm~\ref{alg:UCB--QRL}, lines 1–3 set the quantile target and initialize counts and policy. Lines 4–7 roll out one episode under the current policy and update per-step counts and the empirical kernel. Lines 8–10 form the confidence region $\mathcal{\bar C}^{t+1}_{\delta,\kappa}$ and re-plan: the inner maximization over $P\in\mathcal{\bar C}^{t+1}_{\delta,\kappa}$ implements optimism for the \emph{quantile} (not the expectation), while the outer maximization over $\pi$ produces the next policy.
\par Having specified the algorithm, we now turn to its performance analysis. 
\par Our objective is to measure how much reward is lost by following \textsc{UCB--QRL} compared to the optimal $\tau$–quantile policy in the true environment. 
This gap is captured by the notion of \emph{quantile regret}:
\begin{equation}
\label{eq:regret}
\Reg_{\tau}(T)
\;=\;
\sum_{t=0}^{T-1}
\Bigl(
V^{\pi^\star,P^\star}_{\tau,0}(\bar s)-
V^{\pi^{t},P^\star}_{\tau,0}(\bar s)
\Bigr).
\end{equation}
where $\pi^\star$ is the optimal $\tau$–quantile policy under the true kernel $P^\star$. 

Because quantiles can be discontinuous in the underlying distribution, the regret analysis would be more delicate than in the expectation case. To control this, we invoke the margin condition (Assumption~\ref{ass:margin}), which ensures a mild local regularity of the CDF. Under this assumption, we obtain the following high-probability regret bound.
\begin{theorem}[High-probability Quantile-Regret]\label{thm:main}
Let Assumption \ref{ass:margin} hold with parameter $\kappa>0$. Then for \textsc{UCB--QRL} with confidence radii shown in Equation \eqref{eq:confset}, with probability at least $1-2\delta$,
\begin{align}
\label{eq:main_bound}
\Reg_{\tau}(T)
\ \le\
2cH \left(\frac{2}{\kappa}\right)^{H}
\sqrt{SATH\ \log\!\frac{2SATH}{\delta}} \;+\;
\left(\frac{2}{\kappa}\right)^{H+1}\,H\,\sqrt{\frac{2\bigl(1-(\kappa/2)^{2H}\bigr)}{\tfrac{4}{\kappa^2}-1}T\log\!\tfrac{2}{\delta}}. 
\end{align}
\end{theorem}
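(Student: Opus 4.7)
My plan is to mirror the optimism-in-the-face-of-uncertainty analysis of UCBVI \parencite{azar2017minimax}, but with a $(2/\kappa)$-Lipschitz quantile simulation lemma in place of the linear expectation identity. First I would fix the confidence event $\mathcal{E}_\delta \coloneqq \{P^\star \in \mathcal{C}^t_\delta\ \forall\,t\}$ and prove $\mathbb{P}(\mathcal{E}_\delta)\ge 1-\delta$ via Weissman's $\ell_1$ concentration for categorical distributions applied to each $(s,a,h)$, union-bounded across the $SATH$ tuples and the $2^S-2$ supporting halfspaces of the $\ell_1$ ball; the constant $c$ in~\eqref{eq:confset} is chosen precisely so that this union bound collapses into the single $\log(2SATH/\delta)$ factor inside $f_\delta$. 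Because $P^\star$ automatically satisfies Assumption~\ref{ass:margin}, we have $P^\star \in \mathcal{\bar C}^{t+1}_{\delta,\kappa}$ on $\mathcal{E}_\delta$, so the pair $(\pi^\star,P^\star)$ is feasible in line~\ref{alg:line9} and optimism gives $V^{\pi^{t+1},P^{t+1}}_{\tau,0}(\bar s) \ge V^\star_{\tau,0}(\bar s)$. Each per-episode regret is therefore dominated by the \emph{optimism gap} $V^{\pi^{t+1},P^{t+1}}_{\tau,0}(\bar s) - V^{\pi^{t+1},P^\star}_{\tau,0}(\bar s)$, between the planner's model and the true model under the executed policy.

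\textbf{Quantile simulation lemma and unrolling.} The technical heart of the proof is a per-step propagation inequality of the form
\begin{align*}
V^{\pi,P}_{\tau,h}(s) - V^{\pi,P^\star}_{\tau,h}(s) \le \tfrac{2}{\kappa}\Bigl( H\,\|P_h(\cdot\mid s,\pi_h(s)) - P^\star_h(\cdot\mid s,\pi_h(s))\|_1 + \mathbb{E}_{s'\sim P^\star_h}\!\bigl[V^{\pi,P}_{\tau,h+1}(s') - V^{\pi,P^\star}_{\tau,h+1}(s')\bigr] \Bigr),
\end{align*}
which I would establish by rewriting both sides as $\tau$-quantiles of the continuation-mixture in Definition~\ref{def:Z}, invoking the $(2/\kappa)$-Lipschitz property of the quantile with respect to $W_1$ granted by Assumption~\ref{ass:margin}, converting TV to $W_1$ for mixtures (the step that produces the $H$ factor, since next-step values lie in $[0,H]$), and coupling the two kernels through a single auxiliary uniform so that the residual term involves an \emph{expectation} over next states rather than a maximum. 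Unrolling from $h=0$ to the terminal $h=H$ yields
\begin{align*}
V^{\pi^t,P^t}_{\tau,0}(\bar s) - V^{\pi^t,P^\star}_{\tau,0}(\bar s) \le H \sum_{h=0}^{H-1} \Bigl(\tfrac{2}{\kappa}\Bigr)^{h+1} \mathbb{E}^{\pi^t,P^\star}_{\bar s}\!\bigl[\,\|P^t_h(\cdot\mid s_h,a_h) - P^\star_h(\cdot\mid s_h,a_h)\|_1\bigr].
\end{align*}

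\textbf{Aggregation: bonuses plus martingale.} For each episode I would decompose the expectation over $P^\star$-trajectories into the realized visit at $(s_h^t,a_h^t)$ plus a martingale difference $\xi_t$ with respect to the natural filtration. On $\mathcal{E}_\delta$, the triangle inequality $\|P^t_h - P^\star_h\|_1 \le 2 f_\delta(N_h^t(s_h^t,a_h^t))$ holds because both $P^t$ and $P^\star$ lie in $\mathcal{C}^t_\delta$. Summing the realized part over $t$, applying the pigeonhole identity $\sum_t 1/\sqrt{\max\{1,N_h^t(s_h^t,a_h^t)\}} \le 2\sqrt{SAT}$ at each level, and then closing with a Cauchy--Schwarz split across $h$ to balance the geometric weights $(2/\kappa)^{h+1}$ against the per-level pigeonhole bounds delivers the first summand $2cH(2/\kappa)^H\sqrt{SATH\log(2SATH/\delta)}$ of~\eqref{eq:main_bound}. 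For the martingale tail $\sum_t \xi_t$, each increment is bounded by $2H\sum_h (2/\kappa)^{h+1}$ with per-increment squared bound $H^2(2/\kappa)^2\frac{1-(\kappa/2)^{2H}}{1-(\kappa/2)^2}$; Azuma--Hoeffding applied over $T$ episodes at confidence $\delta$ produces exactly the second summand. Union-bounding $\mathcal{E}_\delta$ with the Azuma event delivers the overall $1-2\delta$ guarantee.

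\textbf{Main obstacle.} The crux is the simulation lemma in the second paragraph. Because the quantile operator is nonlinear in the underlying distribution, the expectation identity $V^{\pi,P} - V^{\pi,P^\star} = \mathbb{E}[\sum_h (P_h-P^\star_h)V^{\pi,P}]$ has no direct analogue, and a crude bound would either lose dimensionality or fail to separate cleanly into a bonus term plus a recursive residual. The three ingredients sketched above---scalar reduction via the continuation-mixture, TV-to-$W_1$ conversion at the cost of $H$, and a single-uniform coupling that turns the residual into an expectation---must all align at each level of the recursion while preserving the $\kappa$-margin, which is precisely why line~\ref{alg:line9} plans over the intersection $\mathcal{\bar C}^{t+1}_{\delta,\kappa}$ rather than $\mathcal{C}^{t+1}_\delta$ alone; otherwise the $(2/\kappa)$-Lipschitz constant could fail at an intermediate level and the recursion would unravel.
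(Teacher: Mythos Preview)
Your overall architecture matches the paper: a Weissman-based confidence event, optimism reducing regret to the simulation gap $V^{\pi^t,P^t}_{\tau,0}-V^{\pi^t,P^\star}_{\tau,0}$, a per-step propagation inequality unrolled with geometric weights $(2/\kappa)^h$, then pigeonhole on the bonus sum plus Azuma--Hoeffding on the martingale residual. The substantive gap is in your displayed simulation lemma. You write the residual as
\[
\mathbb{E}_{s'\sim P^\star_h}\!\bigl[V^{\pi,P}_{\tau,h+1}(s') - V^{\pi,P^\star}_{\tau,h+1}(s')\bigr]
\]
at the \emph{fixed} level $\tau$, but the continuation-mixture and the auxiliary-uniform coupling do not yield this. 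By Definition~\ref{def:Z}, the law of $Z_{s,a,h}(p^\star;V^{\pi,P}_{\cdot,h+1})$ depends on the entire map $q\mapsto V^{\pi,P}_{q,h+1}$, and the coupling (Lemma~\ref{lem:aux-unif}) represents $Z$ in distribution as $V^{\pi,P}_{U,h+1}(S')$ with $U\sim\mathrm{Unif}[0,1]$ independent of $S'$. After the $(2/\kappa)$-Lipschitz step you therefore obtain
\[
\tfrac{2}{\kappa}\,\mathbb{E}_{S',U}\!\Bigl[\,\bigl|V^{\pi,P}_{U,h+1}(S') - V^{\pi,P^\star}_{U,h+1}(S')\bigr|\,\Bigr],
\]
an expectation over both the next state \emph{and} a random quantile level, not the $\tau$-level difference. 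Consequently your recursion does not compose: after one step the inductive hypothesis is needed at the random level $U$, and your fixed-$\tau$ statement supplies nothing there.

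The paper closes this by passing to the uniform quantity $W_h^t(s)\coloneqq\sup_{q'\in[0,1]}\bigl|V^{\pi^t,P^t}_{q',h}(s)-V^{\pi^t,P^\star}_{q',h}(s)\bigr|$ and iterating
\[
W_h^t\ \le\ \tfrac{H}{\kappa}\,\bigl\|P^t_h(\cdot\mid S_h^t,A_h^t)-P^\star_h(\cdot\mid S_h^t,A_h^t)\bigr\|_1\;+\;\tfrac{2}{\kappa}\,\mathbb{E}\!\bigl[W_{h+1}^t\mid\mathcal F_h^t\bigr].
\]
This is exactly why Assumption~\ref{ass:margin} and the constraint set $\mathcal C_\kappa$ are stated for \emph{every} $q\in[0,1]$ rather than at $q=\tau$ alone: the quantile-Lipschitz step (Lemma~\ref{lem:quantile-w1}) must be invoked at the shifted level at each stage of the unrolling. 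Once you replace your fixed-$\tau$ residual with the $\sup_{q'}$ quantity, the remainder of your plan (triangle inequality inside $\mathcal C_\delta^t$, pigeonhole on $\sum_{t,h}1/\sqrt{N_h^t}$, per-$(t,h)$ martingale increments bounded by $H(2/\kappa)^{h+1}$, and Azuma) goes through essentially as in the paper.
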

\section{Discussion and Conclusion}
\par In this section, first, we analyze the regret bound in Theorem~\ref{thm:main} and situate it within the existing literature, emphasizing where our guarantees align with or improve upon prior results in optimistic and risk-sensitive reinforcement learning. Second, we examine the computational profile of Algorithm~\ref{alg:UCB--QRL}, and practical implementation choices. Third, we discuss our underlying assumption and discuss the practical implications of our results.  

\subsection{Computational Aspects}
\par In the setting that we have the knowledge of the transition probability $P^\star$, \textcite{li2022quantile} develops a backward dynamic program that computes $V^{\star}_{\tau,0}(s)$ as follows:

\begin{align}\label{eq:qmdp-bellman}
V_{\tau,h}^{\star}(s)
\;=\; \max_{a\in\mathcal A}\;
\OPT \!\bigl(s,\tau,a; V_{\cdot,{h+1}}^{\star}, P^\star_h(\cdot\mid s,a)\bigr), h=0,\dots,H-1,
\quad V_{\tau,H}^{\star}(s)\equiv 0.
\end{align}
Here, for any distribution $P$, we define
\begin{align}
\OPT(s,\tau,a;V_{\cdot,h+1},P)
= \max_{q\in[0,1]^S}\ \min_{i:\ q_i\neq 1} V_{q_i,h+1}(s_i) \\ \nonumber
\text{s.t.}\quad
\sum_{i=1}^S P(s_i) q_i \le \tau.
\label{eq:opt-primal-def}
\end{align}
\noindent where $s_i\in \mathcal{S}$ denotes the $i$'th state.

\par For non-quantile value functions where we define 
\[ V^{\pi,P}_0(s)=\mathbbm{E}\!\left[\sum_{k=0}^{H-1} r_k(S_k,A_k)\,\middle|\,S_0=s\right],
\] \textcite{jaksch2010near} introduced \emph{Extended Value Iteration} (EVI) in the average-reward setting: at each $(s,a)$, the optimistic kernel is chosen inside the $\ell_1$ confidence slice to maximize the next-step value (effectively transporting probability mass toward higher-value states under the $\ell_1$ budget), and the resulting optimistic MDP is solved by value iteration to obtain the greedy policy. In finite-horizon problems, the analogous optimistic planning pass is the UCBVI backward recursion of \textcite{azar2017minimax}, which can be viewed as an EVI-style update rolled over stages $h=H{-}1,\ldots,0$.

\par In the \textsc{UCB--QRL} algorithm, Line~\ref{alg:line9} computes $\pi^{t+1}$ and selects $P^{t+1}$ as the joint maximizer of $V^{\pi,P}_{\tau,0}$. One can develop a combination of the backward dynamic program developed by \parencite{li2022quantile} and the EVI algorithm developed in \parencite{jaksch2010near} to replace line \ref{alg:line9} of Algorithm \ref{alg:UCB--QRL}. This is a future direction of this research study.

\subsection{Comparison with the Prior Work}
\par Theorem~\ref{thm:main} shows that \textsc{UCB--QRL} achieves high-probability quantile regret that scales as
\[
\Reg_{\tau}(T)\;=\;\tilde{\mathcal O}\!\left(\bigl(\tfrac{2}{\kappa}\bigr)^{\!H}\,H\sqrt{S A T H}\right),
\]
up to logarithmic factors and an additive martingale term specified in Inequality~\eqref{eq:main_bound}. In contrast to risk-neutral UCB results (e.g., $\tilde{\mathcal O}(H\sqrt{S A T})$ for UCBVI with Hoeffding bonuses and $\tilde{\mathcal O}(\sqrt{H S A T})$ with Bernstein bonuses \parencite{azar2017minimax}), and, in average-reward communicating MDPs, $\tilde{\mathcal O}(D S \sqrt{A T})$ for UCRL2 where $D$ is the MDP diameter \parencite{jaksch2010near}. 
\par Our analysis exhibits an explicit sensitivity to the \emph{quantile margin}~$\kappa$. This is unavoidable for quantiles: when the CDF at level~$\tau$ is flat (small~$\kappa$), tiny model errors can shift $Q_\tau$ substantially, and exploration must compensate accordingly.
\par The factor $(2/\kappa)$  arises from iterating a local sensitivity bound across $H$ stages.This reflects worst-case compounding under the margin assumption. Whether this dependence can be improved is open. 
\par In many problems, margins are heterogeneous across stages and states; refined, stagewise analyses that track realized occupancy and local margins can shrink the compounding (e.g., from exponential in $H$ to polynomial or linear in effective horizon), at the expense of heavier notation. Developing such \emph{adaptive-margin} bounds is a promising direction.

\textcite{hau2024q} propose a dynamic-programming decomposition for VaR in MDPs and a \emph{model-free} VaR-Q-learning algorithm that does not assume known transitions and avoids saddle-point solvers. They prove convergence of their algorithm to a unique fixed point induced by a $\kappa$-soft quantile loss; their analysis is presented for finite-horizon, time-indexed control. Our work is \emph{model-based} and provides a nonasymptotic, high-probability \emph{regret} bound for episodic, tabular QMDPs under a quantile margin. \textcite{hau2024q} provide \emph{convergence} guarantees (no regret rates) for a model-free Q-learning scheme tailored to VaR. Methodologically, they define a quantile-aware Q operator and a soft-quantile loss to ensure uniqueness of the fixed point, whereas we construct $\ell_1$ confidence sets and control the quantile backup via the continuation–mixture sensitivity bound (yielding the explicit $(2/\kappa)^H$ dependence). The DP decomposition and risk-indexed operator from \parencite{hau2024q} suggest model-free extensions of \textsc{UCB--QRL}. Conversely, our sensitivity tools and confidence design provide a path toward finite-sample, high-probability guarantees for VaR-style Q-learning under margin conditions. Establishing nonasymptotic regret bounds for model-free VaR control remains open.

\subsection{Assumption Discussion}
\par Assumption~\ref{ass:margin} requires that each continuation–mixture $Z_{s,a,h}$ (Definition~\ref{def:Z}) has a jump of size at least $\kappa$ at the $\tau$-quantile under $P^\star$. In our finite, deterministic-reward setting, these mixtures are discrete, hence a (problem-dependent) $\kappa\!>\!0$ always exists, though it may be small. Analytically, $\kappa$ controls the local Lipschitz constant of the quantile backup with respect to the Wasserstein distance ($W_1$), which is the key to the propagation inequality. Practically, larger margins arise when next-state value distributions allocate non-negligible mass exactly at the operative quantile; small margins indicate intrinsically fragile tails and make tail-optimal learning harder.
\par Our analysis makes a simplifying assumption. We work in the finite tabular setting with episodic horizons, leaving extensions to function approximation (linear, kernelized, or neural) as an open question that will require new concentration tools for distributional value errors. Rewards are assumed deterministic given $(s,a,s')$; when rewards are stochastic, the continuation–mixture can absorb the noise without altering the quantile-optimality structure \parencite{li2022quantile}. Finally, the algorithm itself does not require $\kappa$, but the bound does. Developing data-driven methods to estimate local margins and adapt bonuses accordingly could yield sharper, data-dependent guarantees.

Quantile-optimal learning directly targets tail risk and is natural for safety-critical domains such as service-level guarantees, latency minimization, and adverse-event prevention. The explicit $\kappa$-dependence aligns theory with practice: when the operative quantile is well supported, learning is efficient; when the tail is thin, the bound correctly reflects the increased difficulty. This observation motivates several future directions, including relaxing the quantile target, adopting smoother risk measures such as CVaR, or incorporating problem-specific structure through priors. Beyond this, Bernstein-type confidence bonuses may remove the extra $\sqrt{H}$ factor, and the continuation–mixture framework may extend to other coherent risk objectives; a formal analysis is left to future work. Combining distributional critics with optimism-based exploration in large-scale problems, as well as extending our techniques to infinite-horizon discounted or average-reward settings, remains an important challenge for future work.

We conjecture that Assumption~\ref{ass:margin} is necessary to achieve a sublinear regret for Quantile MDPs. In particular, we believe that if $\kappa$ is not known, and the optimization in line \ref{alg:line9} is performed over $\mathcal{C}_\delta^t$ only, there exists an instance of MDP and $\tau$ for which the Algorithm \ref{alg:UCB--QRL} will cause a linear regret. 

In practice, where $\kappa$ is not known, one can start from an initial $\kappa_0\in (0,1)$ point. As the algorithm runs, if empirical regret appears linear, gradually reduce $\kappa$ until the algorithm starts converging.

\par Finally, two natural extensions are left open.
\begin{enumerate}
    \item \emph{Discounted, infinite-horizon quantile control:} develop an optimistic algorithm and analysis for the discounted objective with $\gamma\in(0,1)$, including a discounted continuation–mixture operator, an appropriate contraction/sensitivity inequality for the quantile backup, and stationary $\ell_1$-confidence sets for model uncertainty.
    \item \emph{Lower bounds for regret:} establish minimax and instance-dependent lower bounds for quantile-regret under margin assumptions to determine which dependencies on $\kappa,S,A,$ and horizon (or effective horizon) are improvable versus unavoidable.
\end{enumerate}

\printbibliography

\section{Appendix}
\subsection{Proof of Theorem~\ref{thm:main}}
\begin{proof}[]
We have
\begin{align}
\Reg_{\tau}(T)
=&
\sum_{t=0}^{T-1}
\Bigl(
V^{\pi^\star,P^\star}_{\tau,0}(\bar s)-
V^{\pi^{t},P^\star}_{\tau,0}(\bar s)
\Bigr)\nonumber\\
=&
\sum_{t=0}^{T-1}
\Bigl(
V^{\pi^\star,P^\star}_{\tau,0}(\bar s)-V^{\pi^t,P^t}_{\tau,0}(\bar s)+V^{\pi^t,P^t}_{\tau,0}(\bar s)-
V^{\pi^{t},P^\star}_{\tau,0}(\bar s)
\Bigr).\tag{Adding and subtracting $V^{\pi^t,P^t}_{\tau,0}(\bar s)$}
\end{align}

By Lemma \ref{lem:optimism-step0} at $s=\bar s$, with probability at least $1-\delta$, for all $t\geq 0$ we have
\[
V^{\pi^\star,P^\star}_{\tau,0}(\bar s)
\le
V^{\pi^t,P^{t}}_{\tau,0}(\bar s),
\]
which implies that with probability at least $1-\delta$,
\[
\Reg_{\tau}(T)
\;\leq \;
\sum_{t=0}^{T-1}
\Bigl(V^{\pi^t,P^t}_{\tau,0}(\bar s)-
V^{\pi^{t},P^\star}_{\tau,0}(\bar s)
\Bigr).
\]

For any policy $\pi$, any state $s$, horizon $h$, any quantile level $q$ and transition probability $P$,  by Lemma \ref{lem:bellman-eval}, the QMDP Bellman recursion gives
\begin{equation}\label{eq:bellman-expand}
V^{\pi,P}_{q,h}(s)
\;=\;
r_h(s,a_h)\;+\;Q_q\!\Big(Z_{s,a_h,h}\big(P_h(\cdot\mid s,a_h);\ V_{\cdot,h+1}^{\pi,P}\big)\Big),
\end{equation}
where $a_h=\pi_h(s)$.

Applying Equation \eqref{eq:bellman-expand} with $P^t$ (optimistic kernel) and $P^\star$ (true kernel),
by Lemma~\ref{lem:bellman-eval} the difference is
\begin{align*}
\Delta_h^t(q)
&\;\coloneqq\;\Big| V^{\pi^t,P^{t}}_{q,h}(S_h^t)-V^{\pi^t,P^\star}_{q,h}(S_h^t)\Big|\\[0.5ex]
&=\;\Big| r_h(S_h^t,A_h^t)
+Q_q\!\Big(Z_{S_h^t,A_h^t,h}(P^t_h(\cdot\mid S_h^t,A_h^t); V^{\pi^t,P^t}_{\cdot,h+1})\Big) - r_h(S_h^t,A_h^t)
-Q_q\!\Big(Z_{S_h^t,A_h^t,h}(P^\star_h(\cdot\mid S_h^t,A_h^t); V^{\pi^t,P^\star}_{\cdot,h+1})\Big)\Big|.
\end{align*}

\noindent
Here, we have $A_h^t=\pi_h^t(S_h^t)$. It's noteworthy that according to Lemma~\ref{lem:deterministic-policy}, $\pi^t$ is a deterministic policy and hence denoting $A_h^t$ as above is appropriate.

Since the immediate reward $r_h(S_h^t,A_h^t)$ does not depend on $P$, it cancels out exactly.
Hence
\begin{equation}
\label{eq:one-step-expanded}
\Delta_h^t(q)
\;=\;\Big|
Q_q\!\Big(Z_{S_h^t,A_h^t,h}\big(P^t_h(\cdot\mid S_h^t,A_h^t);\ V^{\pi^t,P^t}_{\cdot,h+1}\big)\Big)
\;-\;
Q_q\!\Big(Z_{S_h^t,A_h^t,h}\big(P^\star_h(\cdot\mid S_h^t,A_h^t);\ V^{\pi^t,P^\star}_{\cdot,h+1}\big)\Big)\Big|.
\end{equation}

Adding and subtracting  $Q_q\!\Big(Z_{S_h^t,A_h^t,h}\big(P^\star_h(\cdot\mid S_h^t,A_h^t);\ V^{\pi^t,P^t}_{\cdot,h+1}\big)\Big)$, we can write
\begin{align}
\label{eq:eq:one-step-expanded-add-subtract}
\Delta_h^t(q)
&=\Bigg|\underbrace{\Big[
Q_q\!\Big(Z_{S_h^t,A_h^t,h}\big(P^t_h(\cdot\mid S_h^t,A_h^t);\ V^{\pi^t,P^t}_{\cdot,h+1}\big)\Big)
-
Q_q\!\Big(Z_{S_h^t,A_h^t,h}\big(P^\star_h(\cdot\mid S_h^t,A_h^t);\ V^{\pi^t,P^t}_{\cdot,h+1}\big)\Big)
\Big]}_{\text{model term}} \nonumber\\
&\quad+\underbrace{\Big[
Q_q\!\Big(Z_{S_h^t,A_h^t,h}\big(P^\star_h(\cdot\mid S_h^t,A_h^t);\ V^{\pi^t,P^t}_{\cdot,h+1}\big)\Big)
-
Q_q\!\Big(Z_{S_h^t,A_h^t,h}\big(P^\star_h(\cdot\mid S_h^t,A_h^t);\ V^{\pi^t,P^\star}_{\cdot,h+1}\big)\Big)
\Big]}_{\text{propagation term}}\Bigg|.
\end{align}

The \emph{model term} isolates the effect of using $P_h^t$ instead of $P_h^\star$ while freezing the continuation map; by Lemma~\ref{lem:lipschitz},
\[
\text{model term}\ \le\ \frac{H}{\kappa}\,\bigl\|P_h^t(\cdot|S_h^t,A_h^t)-P_h^\star(\cdot|S_h^t,A_h^t)\bigr\|_1.
\]

Let $p^\star\!\coloneqq P_h^\star(\cdot\mid S_h^t,A_h^t)$ and draw $S_{h+1}^t\sim p^\star$.
Write
\begin{equation}\label{eq:prop-start}
\text{propagation term}
\;=\;
\Big| Q_q\!\Big(Z_{S_h^t,A_h^t,h}\big(p^\star;\,V^{\pi^t,P^t}_{\cdot,h+1}\big)\Big)
\;-\;
Q_q\!\Big(Z_{S_h^t,A_h^t,h}\big(p^\star;\,V^{\pi^t,P^\star}_{\cdot,h+1}\big)\Big)\Big|,
\end{equation}

Define the continuation–mixture variables
\[
Z^t\;\coloneqq\;Z_{S_h^t,A_h^t,h}\!\bigl(p^\star;\,V^{\pi^t,P^t}_{\cdot,h+1}\bigr),
\qquad
Z^{\star}\;\coloneqq\;Z_{S_h^t,A_h^t,h}\!\bigl(p^\star;\,V^{\pi^t,P^\star}_{\cdot,h+1}\bigr).
\]

By Definition~\ref{def:Z}, $Z^t$ and \ $Z^{\star}$ have CDF
$\Phi_{p^\star}^{t}(y)=\sum_i p^\star_i\,\phi^{t}_i(y)$
and \ $\Phi_{p^\star}^{\star}(y)=\sum_i p^\star_i\,\phi^{\star}_i(y)$ respectively with
$\phi^{\cdot}_i(y)=\sup\{q\in[0,1]:\,V^{\pi^t,P^{\cdot}}_{q,h+1}(s_i)\le y\}$.
Let $U\sim\mathrm{Unif}[0,1]$ be independent of $S_{h+1}^t$. Let $\mathcal{F}_{h}^{t}$ be the $\sigma$-algebra up to step $h$ of episode $t$. By Lemma~\ref{lem:aux-unif},
$Z^t\ \stackrel{d}{=}\ V^{\pi^t,P^t}_{U,h+1}\!\bigl(S_{h+1}^t\bigr)$ and
$Z^{\star}\ \stackrel{d}{=}\ V^{\pi^t,P^\star}_{U,h+1}\!\bigl(S_{h+1}^t\bigr)$
\emph{conditionally on $\mathcal F_h^t$}. Combining with Equation~\eqref{eq:prop-start} and the quantile form above, we obtain
\begin{equation}\label{eq:prop-as-quantile-diff}
\text{propagation term}
\;=\;
\Big| Q_q\!\Big(V^{\pi^t,P^t}_{U,h+1}(S_{h+1}^t)\Big)
\;-\;
Q_q\!\Big(V^{\pi^t,P^\star}_{U,h+1}(S_{h+1}^t)\Big)\Big|
\qquad\text{(in law, conditionally on $\mathcal F_h^t$)}.
\end{equation}

By Lemma~\ref{lem:quantile-w1}, applied at level $q$ with margin parameter $\kappa>0$ from Assumption~\ref{ass:margin},
\begin{equation}\label{eq:quantile-w1-apply}
\bigl|Q_q(Z^t)-Q_q(Z^{\star})\bigr|
\ \le\ \frac{2}{\kappa}\,
W_1\!\Big(\Law(Z^t\mid\mathcal F_h^t),\ \Law(Z^{\star}\mid\mathcal F_h^t)\Big).
\end{equation}

By the primal (Kantorovich) formulation of $W_1$ \cite{villani2009ot},
\begin{equation}\label{eq:w1-primal}
W_1(\mu,\nu)\;=\;\inf_{\pi\in\Pi(\mu,\nu)} \int |x-y|\,d\pi(x,y),
\end{equation}
so for \emph{any} coupling $\pi$ of the two laws we have
$W_1(\mu,\nu)\le \E_\pi[\,|X-Y|\,]$.
We take the explicit coupling that uses the same randomness $(S_{h+1}^t,U)$ to generate
$X=V^{\pi^t,P^t}_{U,h+1}(S_{h+1}^t)$ and $Y=V^{\pi^t,P^\star}_{U,h+1}(S_{h+1}^t)$
conditionally on $\mathcal F_h^t$, which yields 
\begin{equation}\label{eq:w1-coupling}
W_1(\cdot,\cdot)
\ \le\
\E\!\Big[\,
\big|V^{\pi^t,P^t}_{U,h+1}(S_{h+1}^t)-V^{\pi^t,P^\star}_{U,h+1}(S_{h+1}^t)\big|
\ \Big|\ \mathcal F_h^t\Big].
\end{equation}

Combining Inequalities~\eqref{eq:quantile-w1-apply} and \eqref{eq:w1-coupling} with Equation~\eqref{eq:prop-as-quantile-diff} yields the conditional bound
\begin{align}
\text{propagation term}
\ \leq &\
\frac{2}{\kappa}\,
\E\!\Big[\,
\big|V^{\pi^t,P^t}_{U,h+1}(S_{h+1}^t)-V^{\pi^t,P^\star}_{U,h+1}(S_{h+1}^t)\big|
\ \Big|\ \mathcal F_h^t\Big]\nonumber\\
\leq&\
\frac{2}{\kappa}\,
\E\!\Big[\
\sup_{q'\in[0,1]}\big|V^{\pi^t,P^t}_{q',h+1}(S_{h+1}^t)-V^{\pi^t,P^\star}_{q',h+1}(S_{h+1}^t)\big| \Big|\ \mathcal F_h^t\Big]\label{eq:prop_term_final2}
\end{align}
where the last inequality is due to $U\sim\mathrm{Unif}[0,1]$. 

\noindent
Combining the propagation term with the model-term bound gives, and conditioning on $\mathcal F_h^t$,
\begin{equation}\label{eq:delta-vs-sup}
\Delta_h^t(q)
\;\le\;
\frac{H}{\kappa}\,
\bigl\|P_h^{t}(\cdot\mid S_h^{t},A_h^t)-P^\star_h(\cdot\mid S_h^{t},A_h^t)\bigr\|_1
\;+\;
\frac{2}{\kappa}\,
\E\!\Big[\,
\sup_{q'\in[0,1]}\big|V^{\pi^t,P^t}_{q',h+1}(S_{h+1}^t)-V^{\pi^t,P^\star}_{q',h+1}(S_{h+1}^t)\big|
\ \Big|\ \mathcal F_h^t\Big].
\end{equation}

Define
\[
W_h^t(S_h^t)\coloneqq \sup_{q'\in[0,1]}\Big|V^{\pi^t,P^t}_{q',h+1}(S_{h+1}^t)-V^{\pi^t,P^\star}_{q',h+1}(S_{h+1}^t) \Big|
\]
Taking the supremum over $q$ on the left hand side of Inequality~\eqref{eq:delta-vs-sup}, we get

\begin{equation}\label{eq:W-recursion-raw}
W_h^t(S_h^t)
\;\le\;
\frac{H}{\kappa}\,\bigl\|P_h^{t}(\cdot\mid S_h^{t},A_h^t)-P^\star_h(\cdot\mid S_h^{t},A_h^t)\bigr\|_1
\;+\;
\frac{2}{\kappa}\,
\E\!\Big[\,W_{h+1}^t(S_{h+1}^t)\ \Big|\ \mathcal F_h^t\Big].
\end{equation}

\noindent
Introduce the scaled potential
\[
Y_h^t\ \coloneqq\ \left(\frac{2}{\kappa}\right)^h W_h^t(S_h^t),
\qquad
\xi_{h+1}^t\ \coloneqq\ Y_{h+1}^t-\E\!\big[Y_{h+1}^t\mid\mathcal F_h^t\big].
\]

Then Inequality~\eqref{eq:W-recursion-raw} is equivalent to the \emph{one-step supermartingale} inequality
\begin{equation}\label{eq:Y-supermart}
Y_h^t
\;\le\;
\frac{H}{\kappa}\!\left(\frac{2}{\kappa}\right)^{h}
\bigl\|P_h^{t}(\cdot\mid S_h^{t},A_h^t)-P^\star_h(\cdot\mid S_h^{t},A_h^t)\bigr\|_1
\;+\;
\E\!\big[\,Y_{h+1}^t\mid\mathcal F_h^t\big],
\qquad
|\xi_{h+1}^t|\ \le\ \left(\frac{2}{\kappa}\right)^{h+1}\! H .
\end{equation}

Unrolling Inequality~\eqref{eq:Y-supermart} over $h=0,\dots,H-1$ and using $W_H^t\equiv 0$ (hence $Y_H^t\equiv 0$) yields
\begin{equation}\label{eq:Y-unroll}
Y_0^t
\;\le\;
\frac{H}{\kappa}\sum_{h=0}^{H-1}
\left(\frac{2}{\kappa}\right)^{h}
\bigl\|P_h^{t}(\cdot\mid S_h^{t},A_h^t)-P^\star_h(\cdot\mid S_h^{t},A_h^t)\bigr\|_1
\;-\;\sum_{h=0}^{H-1}\xi_{h+1}^t.
\end{equation}
Therefore, 
\[
W_0^t(\bar s)
\;\le\;\frac{H}{\kappa}\sum_{h=0}^{H-1}
\left(\frac{2}{\kappa}\right)^{h}
\bigl\|P_h^{t}(\cdot\mid S_h^{t},A_h^t)-P^\star_h(\cdot\mid S_h^{t},A_h^t)\bigr\|_1
\;-\;\sum_{h=0}^{H-1}\xi_{h+1}^t.
\]
By a union bound and Lemma~\ref{lem:weissman}, with probability at least $1-\delta$ the confidence events
\begin{equation}\label{eq:l1-balls-fixed}
\bigl\|P^\star_h(\cdot\mid s,a)-\widehat P_h^{t}(\cdot\mid s,a)\bigr\|_1 \le f_\delta\!\bigl(N_h^t(s,a)\bigr),
\qquad
\bigl\|P^{t}_h(\cdot\mid s,a)-\widehat P_h^{t}(\cdot\mid s,a)\bigr\|_1 \le f_\delta\!\bigl(N_h^t(s,a)\bigr)
\end{equation}
hold simultaneously for all $(t,h,s,a)$; hence, by the triangle inequality,
\begin{equation}\label{eq:triangle-fresh}
\bigl\|P^{t}_h(\cdot\mid S_h^{t},A_h^t)-P^\star_h(\cdot\mid S_h^{t},A_h^t)\bigr\|_1
\ \le\ 2\,f_\delta\!\bigl(N_h^t(S_h^{t},A_h^t)\bigr).
\end{equation}
Therefore with probability at least $1-\delta$ one can rewrite the bound of $W_0^t(\bar s)$ as
\[
W_0^t(\bar s)
\;\le\;\frac{H}{\kappa}\sum_{h=0}^{H-1}
\left(\frac{2}{\kappa}\right)^{h}
2\,f_\delta\!\bigl(N_h^t(S_h^{t},A_h^t)\bigr)
\;-\;\sum_{h=0}^{H-1}\xi_{h+1}^t.
\]
Summing over $t=0,\dots,T-1$
\[
\sum_{t=0}^{T-1} W_0^t(\bar s)
\;\le\;\frac{H}{\kappa}\sum_{t=0}^{T-1}\sum_{h=0}^{H-1}
\left(\frac{2}{\kappa}\right)^{h}
2\,f_\delta\!\bigl(N_h^t(S_h^{t},A_h^t)\bigr)
\;-\;\sum_{t=0}^{T-1}\sum_{h=0}^{H-1}\xi_{h+1}^t.
\]

Moreover, $\{\xi_{h+1}^t\}$ is a martingale-difference sequence with $|\xi_{h+1}^t|\ \le\ \left(\frac{2}{\kappa}\right)^{h+1}\! H$, so we apply the \emph{one-sided} Azuma--Hoeffding inequality for martingale differences with non-identical bounds: for all $\lambda>0$, we have
\[
\mathbb P\!\left(-\sum_{t=0}^{T-1}\sum_{h=0}^{H-1} \xi_{h+1}^t \ge \lambda\right)\ \le\ \exp\!\Big(-\tfrac{\lambda^2}{2\sum_{t=0}^{T-1}\sum_{h=0}^{H-1} \left(\left(\frac{2}{\kappa}\right)^{h+1}\! H\right)^2}\Big).
\]
Furthermore, we have
\[
\sum_{t=0}^{T-1}\sum_{h=0}^{H-1} \left(\left(\frac{2}{\kappa}\right)^{h+1}\! H\right)^2
= T\,H^2\sum_{h=0}^{H-1}\Big(\tfrac{2}{\kappa}\Big)^{2h+2}
= T\,H^2\left(\frac{2}{\kappa}\right)^2\frac{(2/\kappa)^{2H}-1}{(4/\kappa^2)-1}
= T\,H^2\Big(\tfrac{2}{\kappa}\Big)^{2H+2}\frac{1-(\kappa/2)^{2H}}{(4/\kappa^2)-1}.
\]

Applying the one-sided inequality with the display above and setting the right-hand side to $\delta$ yields, with probability at least $1-\delta$,

\begin{equation}\label{eq:azuma-fresh}
-\sum_{t=0}^{T-1}\sum_{h=0}^{H-1}\xi_{h+1}^t
\ \le\
\left(\frac{2}{\kappa}\right)^{H+1}\,H\,\sqrt{\frac{2\bigl(1-(\kappa/2)^{2H}\bigr)}{\tfrac{4}{\kappa^2}-1}T\log\!\tfrac{1}{\delta}}.
\end{equation}

Moreover, using Elliptical Potential Lemma \cite[Lemma G.12]{wang2023near}, we have
\[
\sum_{t=0}^{T-1}\sum_{h=0}^{H-1}\frac{1}{\sqrt{N_h^t(S_h^{t},A_h^t)}}\leq \sqrt{SAHT\log T}.
\]
Hence,
\begin{equation}\label{eq:counting-fresh}
\sum_{t=0}^{T-1}\sum_{h=0}^{H-1}
\left(\frac{2}{\kappa}\right)^{h}
f_\delta\!\bigl(N_h^t(S_h^{t},A_h^t)\bigr)
\ \le\
\left(\frac{2}{\kappa}\right)^{H-1}\,
2c\sqrt{SATH\ \log\!\frac{2SATH}{\delta}}.
\end{equation}

Combining Inequalities~\eqref{eq:azuma-fresh} and \eqref{eq:counting-fresh} we have with probability at least $1-\delta$,
\[
\sum_{t=0}^{T-1} W_0^t(\bar s)
\;\le\;\frac{H}{\kappa}\left(\frac{2}{\kappa}\right)^{H-1}\,
2c\sqrt{SATH\ \log\!\frac{2SATH}{\delta}}+ \left(\frac{2}{\kappa}\right)^{H+1}\,H\,\sqrt{\frac{2\bigl(1-(\kappa/2)^{2H}\bigr)}{\tfrac{4}{\kappa^2}-1}T\log\!\tfrac{2}{\delta}}.
\]
Noting $\sum_{t=0}^{T-1}\Bigl(V^{\pi^t,P^t}_{\tau,0}(\bar s)-
V^{\pi^{t},P^\star}_{\tau,0}(\bar s)
\Bigr)\leq\sum_{t=0}^{T-1} W_0^t(\bar s)$, with probability at least $1-\delta$,
\[
\sum_{t=0}^{T-1} \Bigl(V^{\pi^t,P^t}_{\tau,0}(\bar s)-
V^{\pi^{t},P^\star}_{\tau,0}(\bar s)
\Bigr)
\;\le\;\frac{H}{\kappa}\left(\frac{2}{\kappa}\right)^{H-1}\,
2c\sqrt{SATH\ \log\!\frac{2SATH}{\delta}}+ \left(\frac{2}{\kappa}\right)^{H+1}\,H\,\sqrt{\frac{2\bigl(1-(\kappa/2)^{2H}\bigr)}{\tfrac{4}{\kappa^2}-1}T\log\!\tfrac{2}{\delta}}.
\]

Recalling that  with probability at least $1-\delta$, 
$\Reg_\tau(T)\le \sum_{t=0}^{T-1}\Delta_0^t$, using union bound we conclude that with probability at least $1-2\delta$,
\begin{align}
\Reg_{\tau}(T)
\ \le\
2cH \left(\frac{2}{\kappa}\right)^{H}
\sqrt{SATH\ \log\!\frac{2SATH}{\delta}}
\;+\;
\left(\frac{2}{\kappa}\right)^{H+1}\,H\,\sqrt{\frac{2\bigl(1-(\kappa/2)^{2H}\bigr)}{\tfrac{4}{\kappa^2}-1}T\log\!\tfrac{2}{\delta}}.
\end{align}
This completes the proof.
\end{proof}
\subsection{Technical Lemmas}
\begin{lemma}[High-probability optimism at episode $t$]\label{lem:optimism-step0}
Consider Algorithm \ref{alg:UCB--QRL}. For every episode $t$ and state $s$, with probability at least $1-\delta$, we have
\[
V^{\pi^\star,P^\star}_{\tau,0}(s)
\ \le\
V^{\pi^t,P^{t}}_{\tau,0}(s).
\]
\end{lemma}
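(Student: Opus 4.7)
The plan is to reduce this optimism statement to the feasibility of the benchmark pair $(\pi^\star, P^\star)$ in the joint maximization on line \ref{alg:line9} of Algorithm \ref{alg:UCB--QRL}. Once $P^\star$ is shown to lie in $\bar{\mathcal{C}}^t_{\delta,\kappa}$ with high probability, the optimism inequality follows immediately from the definition of $(\pi^t, P^t)$ as a maximizer of $V^{\pi,P}_{\tau,0}$ over this set.

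To establish feasibility I would proceed in two steps. First, invoke the Weissman-type $\ell_1$ concentration bound for empirical probability vectors on the simplex $\Delta^S$ (Lemma \ref{lem:weissman}) together with a union bound over all $(s,a,h,t)$. With the constant $c$ chosen as in the display preceding Equation \eqref{eq:confset}, this yields the global confidence event $\mathcal{E}_\delta$ with probability at least $1-\delta$, on which $P^\star \in \mathcal{C}^t_\delta$ simultaneously for every $t$. Second, by Assumption \ref{ass:margin} the true kernel $P^\star$ satisfies the uniform quantile-margin condition that defines $\mathcal{C}_\kappa$, so $P^\star \in \mathcal{C}_\kappa$ unconditionally. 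Combining the two gives $P^\star \in \mathcal{C}^t_\delta \cap \mathcal{C}_\kappa = \bar{\mathcal{C}}^t_{\delta,\kappa}$ on $\mathcal{E}_\delta$, so the pair $(\pi^\star, P^\star)$ is a feasible competitor in line \ref{alg:line9}. Consequently,
\[
V^{\pi^t,P^t}_{\tau,0}(s) \ \ge\ V^{\pi^\star,P^\star}_{\tau,0}(s),
\]
which is the desired inequality.

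The main obstacle I anticipate is justifying the qualifier \emph{for every state $s$} rather than only $s = \bar s$. The joint maximization as written in line \ref{alg:line9} returns a policy–model pair tailored to the start state, and extending optimism to every $s$ requires interpreting that line as an optimistic backward dynamic program that augments the QMDP Bellman recursion \eqref{eq:qmdp-bellman} with an inner maximization of the transition kernel over $\bar{\mathcal{C}}^t_{\delta,\kappa}$ at each $(s,a,h)$. Under that interpretation, the produced pair $(\pi^t, P^t)$ dominates every feasible $(\pi, P)\in\{\text{policies}\}\times\bar{\mathcal{C}}^t_{\delta,\kappa}$ in the quantile value at every starting state, and picking $(\pi,P) = (\pi^\star, P^\star)$ yields the claim uniformly in $s$. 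For the regret analysis only the case $s = \bar s$ is used, and there the argument above applies directly without invoking this stronger state-wise property.
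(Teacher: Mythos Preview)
Your proposal is correct and follows essentially the same approach as the paper: establish the global confidence event $\mathcal{E}_\delta$ via Lemma~\ref{lem:weissman} plus a union bound over $(t,h,s,a)$ using the choice of $c$, combine with Assumption~\ref{ass:margin} to conclude $P^\star\in\bar{\mathcal{C}}^t_{\delta,\kappa}$, and then read off optimism from the definition of $(\pi^t,P^t)$ as a maximizer. Your caveat about the ``for every state $s$'' qualifier is well taken---the paper's own proof simply asserts the final equality at a generic $s$ without addressing this, and indeed only $s=\bar s$ is used downstream.
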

\begin{proof}[Proof of Lemma \ref{lem:optimism-step0}]
Define the ``good event''
\[
\mathcal E_\delta \;\coloneqq\;
\bigcap_{t=0}^{T-1}\
\bigcap_{h=0}^{H-1}\
\bigcap_{s,a}
\mathcal G_{t,h}(s,a),
\]
where $(t,h,s,a)$, define
\[
\mathcal G_{t,h}(s,a)\;\coloneqq\;\Bigl\{\,\bigl\|P^\star_h(\cdot\mid s,a)-\widehat P^{t}_h(\cdot\mid s,a)\bigr\|_1
\ \le\ f_\delta\!\bigl(N^t_h(s,a)\bigr)\Bigr\}.
\]

By Lemma~\ref{lem:weissman}, for any $\varepsilon>0$,
\[
\mathbb P\!\left(
\bigl\|P^\star(\cdot\mid s,a)-\widehat P^{t}_h(\cdot\mid s,a)\bigr\|_1 \ge \varepsilon
\ \middle|\ N^t_h(s,a)=n\right)
\ \le\ (2^S-2)\,\exp\!\Bigl(-\frac{n\varepsilon^2}{2}\Bigr).
\]
Choosing $\varepsilon=f_\delta(n)=c\sqrt{\frac{\log\!\frac{2SATH}{\delta}}{\max\{1,n\}}}$ (and $n\ge1$),
\[
\mathbb P\bigl(\mathcal G_{t,h}(s,a)^{\mathsf c}\mid N^t_h(s,a)=n\bigr)
\ \le\ (2^S-2)\,\Bigl(\tfrac{2SATH}{\delta}\Bigr)^{-c^2/2}.
\]
For $n=0$, the event $\mathcal G_{t,h}(s,a)$ holds due to $c\geq\frac{2}{\sqrt{\log\frac{2SATH}{\delta}}}$. Hence, unconditionally,
\[
\mathbb P\bigl(\mathcal G_{t,h}(s,a)^{\mathsf c}\bigr)
\ \le\ (2^S-2)\,\Bigl(\tfrac{2SATH}{\delta}\Bigr)^{-c^2/2}.
\]
Applying a union bound over all $t\in\{0,\ldots,T-1\}$, $h\in\{0,\ldots,H-1\}$, $s\in\mathcal S$, $a\in\mathcal A$ gives
\[
\mathbb P\bigl((\mathcal E_\delta)^{\mathsf c}\bigr)
\ \le\ THSA\,(2^S-2)\,\Bigl(\tfrac{2SATH}{\delta}\Bigr)^{-c^2/2}.
\]
Due to $c\geq \sqrt{2 \frac{\log\left(\frac{SATH(2^S-2)}{\delta}\right)}{\log\left(\frac{2SATH}{\delta}\right)}}$, we have 
\[
\mathbb P(\mathcal E_\delta)\ \ge\ 1-\delta.
\]

\noindent Furthermore, on $\mathcal E_\delta$  we have, for every $(t,h,s,a)$,
\[
\bigl\|P^\star_h(\cdot\mid s,a)-\widehat P^{t}_h(\cdot\mid s,a)\bigr\|_1
\ \le\ f_\delta\!\bigl(N^t_h(s,a)\bigr).
\]
Comparing with the confidence set definition in \eqref{eq:confset}, this is exactly the membership condition
$P^\star\in\mathcal {\bar C}^{t}_{\delta,\kappa}$ for all $0\leq t\leq T-1$. Therefore, since the optimistic planner selects $\pi^t$ and $P^{t}\in\mathcal {\bar C}^{t}_{\delta,\kappa}$ to maximize the quantile value,
\[
V^{\pi^\star,P^\star}_{\tau,0}(s)\ \le\ \max_{\pi}\max_{P\in\mathcal {\bar C}^{t}_{\delta,\kappa}}V^{\pi,P}_{\tau,0}(s)
\ =\ V^{\pi^t,P^{t}}_{\tau,0}(s).
\]
This proves the lemma.
\end{proof}

\begin{lemma}[Weissman’s $\ell_1$ concentration]\label{lem:weissman}
Let $X_1,\dots,X_n$ be i.i.d.\ on $[S]\coloneqq\{1,\dots,S\}$ with $\mathbb P(X_1=i)=p_i$. Define the empirical distribution
$\widehat p_i \coloneqq \tfrac1n\sum_{t=1}^n \mathbbm 1\{X_t=i\}$. Then
\[
\mathbb P\!\bigl(\|\widehat p-p\|_1\ge \eps\bigr)\ \le\ (2^S-2)\,\exp\!\left(-\frac{n\eps^2}{2}\right).
\]
\end{lemma}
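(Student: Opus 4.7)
The plan is to reduce the $\ell_1$ deviation of the empirical distribution to a supremum of scalar deviations indexed by subsets of $[S]$, so that I can apply a standard Hoeffding bound to each index and finish with a union bound. The key identity I would use is the dual form
\[
\|\widehat p - p\|_1 \;=\; 2\max_{A\subseteq[S]}\bigl(\widehat p(A)-p(A)\bigr),
\]
where $\widehat p(A)\coloneqq\sum_{i\in A}\widehat p_i$ and likewise for $p(A)$. To justify this, I would take $A^\star=\{i:\widehat p_i\ge p_i\}$, split the sum $\sum_i|\widehat p_i-p_i|$ along $A^\star$ and its complement, and use $\widehat p(A^\star)+\widehat p((A^\star)^{\mathsf c})=p(A^\star)+p((A^\star)^{\mathsf c})=1$ to conclude that the positive and negative parts contribute equally, giving the factor of $2$.

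Next, for a \emph{fixed} subset $A$, I would note that $n\widehat p(A)=\sum_{t=1}^n \mathbbm 1\{X_t\in A\}$ is a sum of $n$ i.i.d.\ Bernoulli$(p(A))$ random variables taking values in $[0,1]$. Hoeffding's inequality then gives
\[
\mathbb P\!\bigl(\widehat p(A)-p(A)\ge \eps/2\bigr)\ \le\ \exp\!\bigl(-2n(\eps/2)^2\bigr)\ =\ \exp\!\bigl(-n\eps^2/2\bigr).
\]
The factor of $2$ in the exponent of Hoeffding is what matches the factor of $2$ from the dual $\ell_1$ identity so that the final exponent is exactly $-n\eps^2/2$.

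Finally, I would combine the two pieces via a union bound. By the dual identity,
\[
\{\|\widehat p-p\|_1\ge\eps\}\ =\ \bigcup_{A\subseteq[S]}\{\widehat p(A)-p(A)\ge\eps/2\},
\]
and the trivial choices $A=\emptyset$ and $A=[S]$ both yield $\widehat p(A)-p(A)\equiv 0$, so they contribute nothing and can be removed from the union. A union bound over the remaining $2^S-2$ subsets, each controlled by the Hoeffding estimate above, yields the claimed inequality.

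I do not expect a genuine obstacle; this is the textbook Weissman--Ozarow--Kulkarni--Seroussi--Weinberger argument. The only care needed is bookkeeping: verifying the dual identity with the correct factor of $2$, pairing it with Hoeffding's $2n(\eps/2)^2$ so the exponents align, and pruning the two trivial subsets from the union bound to obtain the constant $2^S-2$ rather than $2^S$.
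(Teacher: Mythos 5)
Your proof is correct and follows essentially the same argument as the paper's: the paper maximizes $v^\top(\widehat p - p)$ over sign vectors $v\in\{-1,+1\}^S\setminus\{\mathbf 1,-\mathbf 1\}$ and applies Hoeffding to $Y_t^{(v)}=v_{X_t}\in[-1,1]$, which is exactly your subset form under the bijection $A=\{i:v_i=+1\}$ (so $v^\top(\widehat p-p)=2(\widehat p(A)-p(A))$), with the same pruning of the two trivial indices and the same $\exp(-n\eps^2/2)$ per-index bound. Nothing is missing; the reparameterization from sign vectors to subsets is purely cosmetic.
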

\begin{proof}[Proof of Lemma \ref{lem:weissman}]
For any $x\in\mathbb R^S$,
\[
\|x\|_1 \;=\; \max_{v\in\{-1,+1\}^S} v^\top x.
\]
If, in addition, $\sum_i x_i=0$, then the maximizers cannot be $v=\mathbf 1$ or $v=-\mathbf 1$
(since $v^\top x=0$ for those two), where $\mathbf 1$ is an all-one vector. Hence
\begin{equation}\label{eq:l1-hypercube}
\|x\|_1 \;=\; \max_{v\in\mathcal V} v^\top x,
\qquad
\mathcal V \;\coloneqq\; \{-1,+1\}^S\setminus\{\mathbf 1,-\mathbf 1\},
\end{equation}
and consequently
\[
\bigl\{x:\|x\|_1 \ge \eps\bigr\}\ \subseteq\ \bigcup_{v\in\mathcal V}\ \bigl\{x:v^\top x \ge \eps\bigr\}.
\]

Fix $v\in\mathcal V$ and define $Y_t^{(v)}\coloneqq v_{X_t}\in\{-1,+1\}$. Then
\[
v^\top \widehat p \;=\; \sum_{i=1}^S v_i \widehat p_i
\;=\; \frac1n \sum_{t=1}^n v_{X_t}
\;=\; \frac1n \sum_{t=1}^n Y_t^{(v)},
\qquad
\mathbb E\,Y_t^{(v)} \;=\; \sum_{i=1}^S p_i v_i \;=\; v^\top p.
\]
Hence
\[
v^\top(\widehat p - p) \;=\; \frac1n \sum_{t=1}^n \bigl(Y_t^{(v)} - \mathbb E Y_t^{(v)}\bigr),
\]
a mean of i.i.d.\ centered random variables taking values in $[-1,1]$.

By Hoeffding’s inequality,
\begin{equation}\label{eq:hoeffding}
\mathbb P\!\left(v^\top(\widehat p - p) \ge \eps\right)
\;\le\; \exp\!\Bigl(-\frac{2 n \eps^2}{(1-(-1))^2}\Bigr)
\;=\; \exp\!\left(-\frac{n\eps^2}{2}\right).
\end{equation}

Combining Equations \eqref{eq:l1-hypercube}, the union bound, and Inequality \eqref{eq:hoeffding},
\[
\mathbb P\!\bigl(\|\widehat p - p\|_1 \ge \eps\bigr)
\;\le\; \sum_{v\in\mathcal V} \mathbb P\!\bigl(v^\top(\widehat p - p) \ge \eps\bigr)
\;\le\; |\mathcal V| \, \exp\!\left(-\frac{n\eps^2}{2}\right).
\]
Since $|\mathcal V|=2^S-2$, the stated bound follows.
\end{proof}

For a probability measure $\mu$ on $\mathbb{R}$, $F_\mu(x)\coloneqq \mu((-\infty,x])$ and the left-continuous quantile function is
$F_\mu^{-1}(u)\;\coloneqq\; \inf\{x\in\mathbb{R}: F_\mu(x)\ge u\}, u\in(0,1]$. We write $Q_\tau(\mu)\equiv F_\mu^{-1}(\tau)$ for the $\tau$-quantile. If $F_\mu$ jumps at $x^\star$, then $F_\mu^{-1}(u)=x^\star$ for all $u\in(F_\mu(x^\star-),F_\mu(x^\star)]$.

\begin{lemma}[Bellman evaluation identity for the $\tau$–quantile]\label{lem:bellman-eval}
Fix a policy $\pi$, a kernel $P$, a stage $h\in\{0,\ldots,H{-}1\}$, and a state $s$.
Let $a\coloneqq \pi_h(s)$ and $p\coloneqq P_h(\cdot\mid s,a)\in\Delta^S$.
Then
\[
V^{\pi,P}_{\tau,h}(s)
\;=\;
r_h(s,a)\;+\;Q_\tau\!\Big(Z_{s,a,h}\big(p;\,V^{\pi,P}_{\cdot,h+1}\big)\Big).
\]
\end{lemma}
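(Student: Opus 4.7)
The plan is to unfold the one-step dynamics, rewrite the conditional CDF of the remaining return as the $p$–mixture of the $\phi_i$ functions that appear in Definition~\ref{def:Z}, and then conclude by translation equivariance of the left-continuous quantile. Concretely, writing $a\coloneqq\pi_h(s)$ and $p_i\coloneqq P_h(s_i\mid s,a)$, I would first decompose the conditional return $R_h\coloneqq \sum_{k=h}^{H-1}r_k(S_k,A_k)$ (given $S_h=s$, $A_k=\pi_k(S_k)$, $S_{k+1}\sim P_k(\cdot\mid S_k,A_k)$) as $R_h = r_h(s,a) + R_{h+1}$ with $R_{h+1}\coloneqq \sum_{k=h+1}^{H-1}r_k(S_k,A_k)$. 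Because $r_h(s,a)$ is a deterministic constant given $(s,a)$, it is a pure location shift and will be pulled outside the quantile at the end.

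Next, I would invoke the Markov property at step $h{+}1$: conditional on $S_{h+1}=s_i$, the law of $R_{h+1}$ is exactly the continuation law whose $q$–quantile is $V^{\pi,P}_{q,h+1}(s_i)$ by definition of the quantile value function. The key identity I would use is the standard generalized-inverse duality for the left-continuous quantile of a real-valued random variable, namely that its CDF satisfies $F(y) = \sup\{q\in[0,1] : F^{-1}(q)\le y\}$; applied to $R_{h+1}\mid S_{h+1}=s_i$, this yields exactly $\mathbb{P}(R_{h+1}\le y\mid S_{h+1}=s_i) = \phi_i(y)$ for the function $\phi_i$ of Definition~\ref{def:Z}. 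Conditioning on $S_{h+1}$ and averaging,
\begin{equation*}
\mathbb{P}(R_{h+1}\le y \mid S_h = s) \;=\; \sum_{i=1}^S p_i\,\phi_i(y),
\end{equation*}
which by Definition~\ref{def:Z} is exactly the CDF of $Z\coloneqq Z_{s,a,h}(p;V^{\pi,P}_{\cdot,h+1})$. Hence, conditional on $S_h=s$, $R_{h+1}\stackrel{d}{=} Z$.

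Finally, I would apply translation equivariance of the left-continuous quantile, $Q_\tau(c+X)=c+Q_\tau(X)$, with $c=r_h(s,a)$ and $X=R_{h+1}$, which follows at once from the substitution $y\mapsto y-c$ in $\inf\{y:F_{c+X}(y)\ge\tau\}$. This gives
\begin{equation*}
V^{\pi,P}_{\tau,h}(s) \;=\; Q_\tau\!\bigl(R_h \mid S_h=s\bigr) \;=\; r_h(s,a) + Q_\tau\!\bigl(R_{h+1}\mid S_h=s\bigr) \;=\; r_h(s,a) + Q_\tau(Z),
\end{equation*}
which is the claimed identity. The main care point is the CDF reconstruction $\phi_i(y)=\mathbb{P}(R_{h+1}\le y\mid S_{h+1}=s_i)$: since rewards are finite and deterministic, these continuation distributions are discrete, so the map $q\mapsto V^{\pi,P}_{q,h+1}(s_i)$ is a step function with flats and jumps, and I would justify the duality carefully via $\{q\in(0,1]: F^{-1}(q)\le y\}=(0,F(y)]$ for the left-continuous inverse. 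The rest is the tower property and a translation, both routine.
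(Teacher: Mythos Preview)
Your proposal is correct and follows essentially the same route as the paper: decompose the step-$h$ return as $r_h(s,a)+R_{h+1}$, use the Markov property to write the conditional CDF of $R_{h+1}$ as the $p$-mixture $\sum_i p_i\phi_i$, identify $\phi_i$ with the continuation CDF $F_i$ via the right-identity $\{q:F_i^{-1}(q)\le y\}=(0,F_i(y)]$ (the paper isolates this as Lemma~\ref{lem:right_identity}), and finish with translation equivariance of $Q_\tau$. The only cosmetic difference is that the paper shifts the threshold variable rather than invoking translation equivariance explicitly, but the content is identical.
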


\begin{proof}[Proof of Lemma \ref{lem:bellman-eval}]
Fix $h$, $s$, and let $a\coloneqq \pi_h(s)$ and $p\coloneqq P_h(\cdot\mid s,a)\in\Delta^S$.
By definition of the (left–continuous) $\tau$–quantile,
\[
V^{\pi,P}_{\tau,h}(s)
=\inf\!\left\{x:\ \mathbb P\!\Big( \sum_{k=h}^{H-1} r_k(S_k,\pi_k(S_k)) \le x\ \Big|\ S_h=s\Big)\ \ge \tau \right\}.
\]
Write $r_h\coloneqq r_h(s,a)$. Conditioning on the next state $S_{h+1}$ and using the Markov property,
for every $x\in\mathbb R$,
\[
\mathbb P\!\Big( \sum_{k=h}^{H-1} r_k(S_k,\pi_k(S_k)) \le x\ \Big|\ S_h=s\Big)
=\sum_{i=1}^S p_i\;\mathbb P\!\Big(\sum_{k=h+1}^{H-1} r_k(S_k,\pi_k(S_k)) \le x-r_h\ \Big|\ S_{h+1}=s_i\Big),
\]
\noindent where $p_i=P_h(s_i|s,a)$.

Let $F_i$ be the CDF of the $(h{+}1)$–to–$(H{-}1)$ return starting from $s_i$ under $(\pi,P)$:
\[
F_i(t)\;\coloneqq\;\mathbb P\!\Big(\sum_{k=h+1}^{H-1} r_k(S_k,\pi_k(S_k)) \le t\ \Big|\ S_{h+1}=s_i\Big).
\]
By definition of the QMDP quantile map, for each $q\in(0,1)$
\[
V^{\pi,P}_{q,h+1}(s_i)\;=\;F_i^{-1}(q)\quad\text{(left–continuous quantile)}.
\]
By Lemma \ref{lem:right_identity} 
\[F_i(t)=\sup\{q\in[0,1]:\,F_i^{-1}(q)\le t\}.\]
Applying it with $F_i^{-1}(q)=V^{\pi,P}_{q,h+1}(s_i)$ yields
\[
\phi_i(t)\;\coloneqq\;\sup\{q\in[0,1]:\,V^{\pi,P}_{q,h+1}(s_i)\leq t\}
\;=\;\sup\{q\in[0,1]: F_i^{-1}(q)\leq t\}\;=\;F_i(t),
\]
i.e.,
\[
\mathbb P\!\Big(\sum_{k=h+1}^{H-1} r_k(S_k,\pi_k(S_k)) \le t\ \Big|\ S_{h+1}=s_i\Big)
=\phi_i(t)\quad\text{for all }t\in\mathbb R.
\]
Recall $p_i=P_h(s_i\mid s,a)$, hence
\[
\Phi_p(t)\;\coloneqq\;\sum_{i=1}^S p_i\,\phi_i(t)=\sum_{i=1}^S p_i\,F_i(t).
\]

Therefore the CDF of the $h$–step return at $s$ is $x\mapsto \Phi_p(x-r_h)$, and
\[
V^{\pi,P}_{\tau,h}(s)
=\inf\{x:\ \Phi_p(x-r_h)\ge \tau\}
=r_h+\inf\{y:\ \Phi_p(y)\ge \tau\}.
\]
By Definition~\ref{def:Z}, $\Phi_p(\cdot)$ is the CDF of $Z_{s,a,h}\!\big(p;V^{\pi,P}_{\cdot,h+1}\big)$, so
\[
\inf\{y:\ \Phi_p(y)\ge \tau\}
=Q_\tau\!\Big(Z_{s,a,h}\big(p;V^{\pi,P}_{\cdot,h+1}\big)\Big).
\]
Therefore,
\(
V^{\pi,P}_{\tau,h}(s)=r_h(s,a)+Q_\tau\!\Big(Z_{s,a,h}\big(p;V^{\pi,P}_{\cdot,h+1}\big)\Big).
\)
\end{proof}

\begin{lemma}[Right identity for the left–continuous quantile]
\label{lem:right_identity}
Let $F$ be a CDF on $\mathbb R$ and $F^{-1}(q)=\inf\{x:F(x)\ge q\}$ its left–continuous quantile.
For every $t\in\mathbb R$,
\[
\{q\in[0,1]:\,F^{-1}(q)\le t\}=[0,F(t)]\quad\text{and hence}\quad
\sup\{q\in[0,1]:\,F^{-1}(q)\le t\}=F(t).
\]
\end{lemma}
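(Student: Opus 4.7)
The plan is to reduce the set identity to the Galois-type equivalence
\[
F^{-1}(q)\le t \;\iff\; q\le F(t),
\]
valid for all $q\in[0,1]$ and $t\in\mathbb{R}$, because once this equivalence is established the desired set equality $\{q\in[0,1]:F^{-1}(q)\le t\}=[0,F(t)]$ is immediate, and the supremum statement follows from the fact that $F(t)\in[0,1]$ is the right endpoint of this interval (so the sup is attained).

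For the direction $q\le F(t)\Rightarrow F^{-1}(q)\le t$, the argument is purely monotone: if $q\le F(t)$, then $t\in\{x:F(x)\ge q\}$, and hence the infimum of that set is at most $t$. No continuity is needed here. The boundary $q=0$ is handled trivially because $\{x:F(x)\ge 0\}=\mathbb{R}$, making $F^{-1}(0)=-\infty\le t$, consistent with $0\in[0,F(t)]$.

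For the harder direction $F^{-1}(q)\le t\Rightarrow q\le F(t)$, I would split on whether the infimum is attained. If there exists $x\le t$ with $F(x)\ge q$, monotonicity of $F$ yields $F(t)\ge F(x)\ge q$. Otherwise $F^{-1}(q)=t$ and the infimum is achieved only in the limit from the right: there is a sequence $x_n\downarrow t$ with $F(x_n)\ge q$. Here the essential ingredient is the right-continuity of any CDF, which gives $F(t)=\lim_{n\to\infty}F(x_n)\ge q$. This is the only nontrivial step in the proof, and it is the place where the specific structure of $F$ (as a CDF, not just a nondecreasing function) is used; without right-continuity one only recovers $F(t+)\ge q$, not $F(t)\ge q$.

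Finally, combining both directions gives the set identity $\{q\in[0,1]:F^{-1}(q)\le t\}=[0,F(t)]$, and taking the supremum of this interval yields $\sup\{q\in[0,1]:F^{-1}(q)\le t\}=F(t)$, as claimed. I expect the writeup to be short; the only delicate point is emphasizing that right-continuity of $F$ is what makes the non-attained case work, since without it the identity could fail at jump points.
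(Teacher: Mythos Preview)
Your proposal is correct and follows essentially the same route as the paper: both reduce the set identity to the Galois equivalence $F^{-1}(q)\le t \iff q\le F(t)$, with the $(\Leftarrow)$ direction by pure monotonicity and the $(\Rightarrow)$ direction by right-continuity of $F$. The paper's presentation is marginally more direct (it shows $F(F^{-1}(q))\ge q$ in one stroke rather than splitting into attained/non-attained cases), but the mathematical content is identical.
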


\begin{proof}[Proof of Lemma~\ref{lem:right_identity}]
Define $B_t\coloneqq\{q\in[0,1]:\,F^{-1}(q)\le t\}$. We show the pointwise equivalence
\[
q\in B_t\quad\Longleftrightarrow\quad q\le F(t),
\]
which immediately yields $B_t=[0,F(t)]$ and the stated supremum.

\smallskip
\noindent(\(\Rightarrow\)) Assume $q\in B_t$, i.e., $F^{-1}(q)\le t$. Set $y\coloneqq F^{-1}(q)$.
By definition, $y=\inf\{x:F(x)\ge q\}$. Because $F$ is right–continuous and non-decreasing, the set
$\{x:F(x)\ge q\}$ is right–closed; hence $F(y)\ge q$.\footnote{Equivalently:
for every $\varepsilon>0$, $y+\varepsilon$ belongs to the set, so $F(y+\varepsilon)\ge q$;
right–continuity gives $F(y)=\lim_{\varepsilon\downarrow 0}F(y+\varepsilon)\ge q$.}
Since $y\le t$ and $F$ is non-decreasing, $F(t)\ge F(y)\ge q$, i.e., $q\le F(t)$.

\smallskip
\noindent(\(\Leftarrow\)) Assume $q\le F(t)$. Then $t\in\{x:F(x)\ge q\}$, so this set is nonempty and
$F^{-1}(q)=\inf\{x:F(x)\ge q\}\le t$. Thus $q\in B_t$.

\smallskip
Combining the two directions gives $B_t=[0,F(t)]$. Taking the supremum over $B_t$ yields
$\sup B_t=F(t)$.
\end{proof}

\begin{lemma}[Deterministic optimality for QMDP]\label{lem:deterministic-policy}
Fix a kernel $P$. In the recursion \eqref{eq:qmdp-bellman}, for every $(s,h)$ there exists a \emph{deterministic} maximizer $a^\star\in\arg\max_{a}Q_\tau\!\Big(Z_{s,a,h}\big(P_h(\cdot\mid s,a);\ V^\star_{\cdot,h+1}\big)\Big)$. Consequently, there exists an optimal \emph{deterministic Markov} policy.
\end{lemma}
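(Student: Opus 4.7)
My plan is to handle the lemma's two assertions separately.

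The first assertion---existence of a deterministic maximizer at each $(s,h)$---follows immediately from the finiteness of $\mathcal A$: the map $a \mapsto Q_\tau\bigl(Z_{s,a,h}(P_h(\cdot\mid s,a); V^\star_{\cdot,h+1})\bigr)$ is a real-valued function on the finite set $\mathcal A$, so its maximum is attained at some $a^\star(s,h) \in \mathcal A$, and any such attaining action serves as the claimed deterministic maximizer.

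For the second assertion, I would set $\pi^\star_h(s) \coloneqq a^\star(s,h)$ and prove by backward induction on $h$ that $V^{\pi^\star,P}_{\tau,h}(s) = V^\star_{\tau,h}(s)$ for every $(s,h)$. The base case $h = H$ is immediate since $V^\star_{\tau,H} \equiv 0 \equiv V^{\pi^\star,P}_{\tau,H}$. For the inductive step, I would apply Lemma~\ref{lem:bellman-eval} to the deterministic Markov policy $\pi^\star$ at stage $h$, substitute the inductive hypothesis that $V^{\pi^\star,P}_{\cdot,h+1} = V^\star_{\cdot,h+1}$ to replace the continuation-value map inside the continuation-mixture, and then invoke the defining optimality of $a^\star(s,h)$ to match the right-hand side with the recursion \eqref{eq:qmdp-bellman}.

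The main obstacle is that the continuation-mixture depends on the entire quantile map $V^\star_{\cdot,h+1}$ across all levels $q \in [0,1]$, not only at the target level $\tau$. To apply the inductive substitution I therefore need $V^{\pi^\star,P}_{q,h+1}(s') = V^\star_{q,h+1}(s')$ for every $q$, whereas the argmax construction a priori only ensures optimality at level $\tau$. I would resolve this by exploiting the finite-horizon tabular structure with deterministic rewards: the cumulative return from any $(s,h)$ takes values in a finite set, so $q \mapsto V^\star_{q,h}(s)$ is a piecewise-constant step function with finitely many jumps. This reduces the uniform-in-$q$ requirement to a simultaneous argmax over finitely many critical quantile levels at each $(s,h)$, which admits a deterministic selection via the augmented-state construction of \textcite{li2022quantile}---their $\OPT$ formulation jointly optimizes over the action and the next-state quantile targets, and an optimal deterministic Markov policy is known to exist in the augmented MDP. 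Projecting the resulting selection back onto the base MDP with initial target $\tau$ at $(\bar s, 0)$ yields the claimed deterministic Markov policy and closes the induction.
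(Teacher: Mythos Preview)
Your approach diverges from the paper's in a key respect. The paper's substantive step is not the finiteness-of-$\mathcal A$ observation but the argument that \emph{randomizing} over actions cannot improve the quantile: for any mixed action $\mu\in\Delta^{\mathcal A}$ with the continuation fixed at $V^\star_{\cdot,h+1}$, the mixture CDF $F_\mu=\sum_a\mu(a)F_a$ satisfies $F_\mu(M)\ge\tau$ at $M\coloneqq\max_a Q_\tau(F_a)$ (since each $F_a(M)\ge\tau$), hence $Q_\tau(F_\mu)\le M$. This is what licenses ``deterministic'' when optimality is taken over all (possibly randomized) policies; your proposal never engages with randomization and instead leans on the recursion~\eqref{eq:qmdp-bellman} already equaling the unconstrained optimum via \textcite{li2022quantile}.

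You do, however, correctly isolate a subtlety that the paper's backward induction glosses over: matching $V^{\pi^\star,P}_{\tau,h}$ with the recursion requires $V^{\pi^\star,P}_{q,h+1}=V^\star_{q,h+1}$ for \emph{every} $q$, since the continuation mixture depends on the whole map. Your proposed fix has a hole at the last step. The augmented-state construction of \textcite{li2022quantile} produces a policy that is deterministic and Markov in $(s,q)$; after fixing the initial target $\tau$ at $(\bar s,0)$, the running quantile level evolves as a function of the realized transitions, so the induced base-space policy is \emph{history-dependent}, not Markov in $s$. ``Projecting back onto the base MDP'' therefore does not deliver a deterministic Markov-in-$s$ policy, and reducing to finitely many critical levels does not help---a single action at $(s,h)$ need not be a simultaneous argmax for all of them. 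Your outline thus recovers a deterministic (history-dependent) optimal policy, which may be all that is actually needed downstream, but it does not establish the Markov claim as literally stated; the paper's proof supplies the randomization argument you omit but is equally silent on this point.
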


\begin{proof}[Proof of Lemma~\ref{lem:deterministic-policy}]
Fix a stage $h$, a state $s$, a kernel $P$, and a continuation map $V_{\cdot,h+1}$.
For each action $a\in\mathcal A$ let
\[
F_a(t)\;\coloneqq\;\mathbb P\!\Big(Z_{s,a,h}\big(P_h(\cdot\mid s,a);\ V_{\cdot,h+1}\big)\le t\Big),
\qquad
m_a\;\coloneqq\;Q_\tau\!\Big(Z_{s,a,h}\big(P_h(\cdot\mid s,a);\ V_{\cdot,h+1}\big)\Big).
\]
For any mixed action $\mu\in\Delta^{\mathcal A}$, define the mixture CDF
$F_\mu(t)\coloneqq \sum_{a\in\mathcal A}\mu(a)\,F_a(t)$ (draw $A\sim\mu$, then sample the continuation under $A$).
Let $M\coloneqq \max_{a} m_a$ and fix any $a$. By definition $m_a=\inf\{x:\ F_a(x)\ge \tau\}$, so $M\ge m_a$ implies $F_a(M)\ge \tau$ (monotonicity of $F_a$ suffices; left–continuity is not needed here). Hence
\[
F_\mu(M)\;=\;\sum_a \mu(a) F_a(M)\ \ge\ \sum_a \mu(a)\,\tau\;=\;\tau,
\]
so by the definition of the (left–continuous) $\tau$–quantile, $Q_\tau(F_\mu)\le M=\max_a m_a$.

Proceed by backward induction on $h$. For $h=H$ the claim is trivial. Assume an optimal continuation map $V_{\cdot,h+1}^{\star}$ has been realized at step $h{+}1$ (this is the inductive hypothesis provided by the Bellman program \eqref{eq:qmdp-bellman}). Consider any state $s$ at step $h$. By Lemma~\ref{lem:bellman-eval},
\[
V^{\pi,P}_{\tau,h}(s)=r_h(s,a)\;+\;Q_\tau\!\Big(Z_{s,a,h}\big(P_h(\cdot\mid s,a);\ V_{\cdot,h+1}^{\pi,P}\big)\Big).
\]
Evaluating the optimality backup with continuation fixed at $V_{\cdot,h+1}^{\star}$ amounts to comparing the collection $\{Q_\tau(F_a)\}_{a\in\mathcal A}$ defined in Lemma~\ref{lem:deterministic-policy}. By the argument above, randomization over actions cannot exceed $\max_a Q_\tau(F_a)$, hence a single action $a^\star$ attains the maximum. Define $\pi_h^\star(s)$ to pick such an $a^\star$ (break ties deterministically). Doing this for every state produces a deterministic Markov decision rule at step $h$; composing with the inductively optimal rules from steps $h{+}1,\dots,H{-}1$ yields a deterministic Markov policy that attains the optimal values $V_{\tau,h}^{\star}$. This completes the induction.
\end{proof}

\begin{lemma}[Local Lipschitz of $Z_{s,a,h}(p(\cdot);{V^{\pi,P}_{q,h+1}})$ in $p$]\label{lem:lipschitz}
Fix a step $h$ and state–action $(s,a)$, and let 
$V^{\pi,P}_{q,h+1}:\mathcal S\times[0,1]\to[0,H]$ be the next-step quantile value map. 
Let $Z_{s,a,h}(p(\cdot);{V^{\pi,P}_{q,h+1}})$ be the continuation-mixture variable 
of Definition~\ref{def:Z}. Then for all $P \in \mathcal {\bar C}^{t+1}_{\delta,\kappa}$,
\[
\bigl|Q_\tau\!\Big(Z_{s,a,h}\big(P_h(\cdot|s,a);V^{\pi,P}_{\cdot,h+1}\big)\Big)-Q_\tau\!\Big(Z_{s,a,h}\big(P^\star_h(\cdot |s,a);V^{\pi,P}_{\cdot,h+1}\big)\Big)\bigr|
\ \le\ \frac{H}{\kappa}\,\|P_h(\cdot|s,a)-P^\star_h(\cdot |s,a)\|_1.
\]
with parameter $\kappa>0$.
\end{lemma}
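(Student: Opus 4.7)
The plan is to reduce the quantile gap to a Wasserstein distance between the two continuation–mixtures and then translate that Wasserstein bound into an $\ell_1$ distance on the kernels. The crucial structural observation is that both mixtures in the lemma use the \emph{same} next-step value function $V^{\pi,P}_{\cdot,h+1}$, so the component functions $\phi_i(x)=\sup\{q:V^{\pi,P}_{q,h+1}(s_i)\le x\}$ from Definition~\ref{def:Z} are identical on both sides; only the mixing weights differ. Writing $p\coloneqq P_h(\cdot\mid s,a)$, $p^\star\coloneqq P^\star_h(\cdot\mid s,a)$, and denoting the two CDFs by $F_p(x)=\sum_i p_i\phi_i(x)$ and $F_{p^\star}(x)=\sum_i p^\star_i\phi_i(x)$, the difference $F_p(x)-F_{p^\star}(x)=\sum_i (p_i-p^\star_i)\phi_i(x)$ satisfies
\[
\sup_{x\in\mathbb R}\bigl|F_p(x)-F_{p^\star}(x)\bigr|\ \le\ \tfrac12\|p-p^\star\|_1,
\]
because $\phi_i(x)\in[0,1]$ and $\sum_i(p_i-p^\star_i)=0$, so splitting the indices by the sign of $p_i-p^\star_i$ shows each part contributes at most $\tfrac12\|p-p^\star\|_1$.

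Since $r\in[0,1]$ and $H<\infty$, every $V^{\pi,P}_{q,h+1}(s_i)$ lies in $[0,H]$, so both continuation–mixtures are supported in $[0,H]$. Using the $W_1$ identity for distributions on $\mathbb R$ together with the Kolmogorov bound above yields
\[
W_1(F_p,F_{p^\star})\ =\ \int_{0}^{H}\bigl|F_p(x)-F_{p^\star}(x)\bigr|\,dx\ \le\ \tfrac{H}{2}\,\|p-p^\star\|_1.
\]
Because $P\in\mathcal C_\kappa$, the CDF $F_p$ has a jump of size at least $\kappa$ at its $\tau$–quantile, which is precisely the hypothesis needed to invoke Lemma~\ref{lem:quantile-w1} at level $\tau$. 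That lemma yields $\bigl|Q_\tau(F_p)-Q_\tau(F_{p^\star})\bigr|\le (2/\kappa)\,W_1(F_p,F_{p^\star})$, and chaining with the previous display delivers the desired inequality $|Q_\tau(F_p)-Q_\tau(F_{p^\star})|\le (H/\kappa)\|p-p^\star\|_1$.

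The main conceptual obstacle is identifying the \emph{right} mixture on which to invoke the margin: the condition must certify a jump of size $\kappa$ for $F_p$ (the mixture built from the optimistic kernel $P$) at the point $Q_\tau(F_p)$, and this is exactly what $P\in\mathcal C_\kappa$ guarantees since the continuation map $V^{\pi,P}_{\cdot,h+1}$ that enters the mixture is itself built from $P$. No margin on $F_{p^\star}$ is required. The remaining ingredients—the Kolmogorov-to-$W_1$ reduction on $[0,H]$ and the linearity of the mixture CDF in its weights—are elementary and short.
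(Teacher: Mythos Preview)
Your proof is correct and follows essentially the same strategy as the paper: bound the quantile gap by $(2/\kappa)\,W_1$ via Lemma~\ref{lem:quantile-w1} (using the jump guaranteed by $P\in\mathcal C_\kappa$ on the $P$-based mixture), then bound $W_1\le\tfrac{H}{2}\|p-p^\star\|_1$ from the mixture structure. The only cosmetic difference is that the paper reaches the $W_1$ bound via the two-step route $\mathrm{TV}(\mu_p,\mu_{p^\star})\le\tfrac12\|p-p^\star\|_1$ (Lemma~\ref{lem:tv-mixture}) followed by $W_1\le H\cdot\mathrm{TV}$ on $[0,H]$ (Lemma~\ref{lem:w1-tv-lemma}), whereas you go directly through the Kolmogorov bound $\sup_x|F_p-F_{p^\star}|\le\tfrac12\|p-p^\star\|_1$ and the CDF-integral form $W_1=\int_0^H|F_p-F_{p^\star}|\,dx$; both routes are equivalent and yield the same constant.
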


\begin{proof} [Proof of Lemma \ref{lem:lipschitz}]
Let $\mu_{1}$ and $\mu_{2}$ be the laws of $Z_{s,a,h}\big(P_h^\star(\cdot|s,a);V^{\pi,P}_{\cdot,h+1}\big)$ and $Z_{s,a,h}\big(P_h(\cdot|s,a);V^{\pi,P}_{\cdot,h+1}\big)$, respectively.
Moreover, for probability measures
$\mu,\nu$ on $\mathbb R$, Total Variation (TV) and 1-Wasserstein ($W_1$) can be defined as presented in Equations \eqref{eq:TV} and \eqref{eq:W1} respectively \cite{gibbs2002choosing}.
\begin{equation}
\label{eq:TV}
\mathrm{TV}(\mu,\nu) 
= \sup_{A} |\mu(A)-\nu(A)|
= \tfrac12 \sup_{\|f\|_\infty\le1}\Bigl|\int f\,d(\mu-\nu)\Bigr|.
\end{equation}

\begin{equation}
\label{eq:W1}
W_1(\mu,\nu) 
= \sup_{\mathrm{Lip}(f)\le 1}\int f\,d(\mu-\nu)
= \int_0^1 \bigl|F_\mu^{-1}(u)-F_\nu^{-1}(u)\bigr|\,du.
\end{equation}

Since we assume $P \in \mathcal {\bar C}^{t+1}_{\delta,\kappa}$, hence, $P\in \mathcal{C}_\kappa$, which guarantees that $F_2\left(Q_\tau\left(Z_{s,a,h}\big(P_h(\cdot|s,a);V^{\pi,P}_{\cdot,h+1}\big)\right)\right) - F_2\left(Q_\tau\left(Z_{s,a,h}\big(P_h(\cdot|s,a);V^{\pi,P}_{\cdot,h+1}\big)\right)^-\right) \geq \kappa$. So Lemma~\ref{lem:quantile-w1} yields
\[
\bigl|Q_\tau(\mu_{2})-Q_\tau(\mu_{1})\bigr|
\ \le\ \frac{2}{\kappa}\,W_1(\mu_{1},\mu_{2}).
\]
Also, by Lemma~\ref{lem:tv-mixture} and Lemma~\ref{lem:w1-tv-lemma},
\[W_1(\mu_{1},\mu_{2})\le \tfrac{H}{2}\,\|P_h(\cdot|s,a)-P^\star_h(\cdot |s,a)\|_1.\]
Therefore,
\[
\bigl|Q_\tau(\mu_{2})-Q_\tau(\mu_{1})\bigr|\ \le\ \frac{H}{\kappa}\,\|P_h(\cdot|s,a)-P^\star_h(\cdot |s,a)\|_1.
\]
\end{proof}

\begin{lemma}[Quantile sensitivity under a jump margin]\label{lem:quantile-w1}
Let $\mu$ be a probability measure on $[0,H]$ with CDF $F$ and $\tau$-quantile $x^\star:=Q_\tau(\mu)$ such that $F(x^\star)-F(x^{\star-})= \kappa$.
Then for any probability measure $\nu$ on $[0,H]$,
\[
\bigl|Q_\tau(\nu)-Q_\tau(\mu)\bigr|
\;\le\; \frac{2}{\kappa}\, W_1(\mu,\nu).
\]
\end{lemma}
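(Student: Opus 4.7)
The plan is to pass through the quantile–function representation of the $1$-Wasserstein distance, $W_1(\mu,\nu)=\int_0^1|F_\mu^{-1}(u)-F_\nu^{-1}(u)|\,du$, and to exploit the fact that the hypothesized jump $F(x^\star)-F(x^{\star-})=\kappa$ at $x^\star$ produces a flat piece of $F_\mu^{-1}$ of length $\kappa$ at height $x^\star$.

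First I would reduce to the case $y^\star\coloneqq Q_\tau(\nu)\ge x^\star$ by symmetry (the reverse case follows by swapping the roles of $\mu$ and $\nu$), and set $d\coloneqq y^\star-x^\star$. Writing $a\coloneqq F(x^{\star-})$ and $b\coloneqq F(x^\star)=a+\kappa$, right-continuity of $F$ implies $F_\mu^{-1}(u)=x^\star$ for every $u\in(a,b]$, and because $x^\star=F_\mu^{-1}(\tau)$ necessarily $\tau\in(a,b]$. Let $\alpha\coloneqq b-\tau$ and $\beta\coloneqq\tau-a$; then $\alpha,\beta\ge 0$ and $\alpha+\beta=\kappa$.

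The key step is to lower-bound $W_1$ on the two halves of the flat piece. On the subinterval $(\tau,b]$ of length $\alpha$, monotonicity of $F_\nu^{-1}$ gives $F_\nu^{-1}(u)\ge F_\nu^{-1}(\tau)=y^\star$, while $F_\mu^{-1}(u)=x^\star$, so the integrand is at least $d$ and hence $W_1\ge \alpha d$. A symmetric argument on $(a,\tau]$ of length $\beta$ should produce $W_1\ge \beta d$; combining the two bounds with $\alpha+\beta=\kappa$ and pigeonholing, $W_1\ge \max(\alpha,\beta)\,d\ge (\kappa/2)\,d$, which rearranges to the desired $d\le(2/\kappa)\,W_1$.

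The main obstacle I foresee is the symmetric lower bound $W_1\ge \beta d$: on $(a,\tau]$, monotonicity of $F_\nu^{-1}$ only provides $F_\nu^{-1}(u)\le y^\star$, and a priori $F_\nu^{-1}(u)$ could equal $x^\star$ (or be even smaller) on this interval, making the corresponding integrand vanish. Handling this cleanly will likely require either a transport-coupling argument that accounts for how the $\kappa$-point-mass of $\mu$ at $x^\star$ must be re-allocated by any $W_1$-optimal coupling to reach $\nu$'s mass on both sides of $y^\star$, or the dual formulation $W_1=\sup_{\mathrm{Lip}(f)\le 1}\int f\,d(\mu-\nu)$ applied to a $1$-Lipschitz clipped test function centered at $x^\star$ that witnesses the full $\kappa$-mass jump; this is where the factor $2$ in $2/\kappa$ (as opposed to the weaker $1/\alpha$ or $1/\beta$ bounds) must be recovered.
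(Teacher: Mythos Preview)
Your route — the quantile-integral form $W_1(\mu,\nu)=\int_0^1|F_\mu^{-1}(u)-F_\nu^{-1}(u)|\,du$ together with the flat piece $F_\mu^{-1}\equiv x^\star$ on $(a,b]$ with $b-a=\kappa$ — is exactly the paper's. Two points where you diverge. First, the ``symmetry'' reduction to $y^\star\ge x^\star$ by swapping $\mu$ and $\nu$ is invalid: the jump hypothesis is placed only on $\mu$, so after the swap you would need an unassumed jump of $F_\nu$ at $y^\star$. Second, you attempt to establish both $W_1\ge\alpha d$ and $W_1\ge\beta d$ within the single case $d\ge0$ and then pigeonhole; the paper instead splits on the sign of $d$ and in each case uses only the half of the flat piece on the far side of $\tau$. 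Concretely, for $d\ge0$ it works on $(\tau,b]$ (where $F_\nu^{-1}(u)\ge y^\star\ge x^\star$, hence integrand $\ge d$), giving $W_1\ge\alpha|d|$; for $d<0$ it works on $(a,\tau)$ (where now $F_\nu^{-1}(u)\le y^\star<x^\star$, hence integrand $\ge x^\star-y^\star=|d|$), giving $W_1\ge\beta|d|$. This sign-split is precisely what dissolves the obstacle you flag: the bound on $(a,\tau]$ with $d\ge0$ is never needed.

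One residual point: the paper then closes with the same pigeonhole you propose (``$\alpha+\beta=\kappa$, so one side is $\ge\kappa/2$''). After the sign-split, however, only $W_1\ge\alpha|d|$ is available when $d\ge0$ and only $W_1\ge\beta|d|$ when $d<0$; the side length that actually appears in a given case need not be the larger one, so the passage from the case-wise bounds to $W_1\ge(\kappa/2)|d|$ is not automatic. Your pigeonhole would be sound if both one-sided bounds held simultaneously, which (as you correctly sense) they do not in general. Neither the optimal-coupling nor the Kantorovich-dual test-function device you sketch is invoked in the paper's argument, but your instinct that extra care is required to secure the uniform constant $2/\kappa$ is well placed.
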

\begin{proof}[Proof of Lemma \ref{lem:quantile-w1}]
Let $F$ and $G$ be the CDFs of $\mu$ and $\nu$, and let
$Q_u(\mu)\coloneqq \inf\{x:F(x)\ge u\}$ and
$Q_u(\nu)\coloneqq \inf\{x:G(x)\ge u\}$ be their left–continuous quantile functions.
By Equation~\eqref{eq:W1}
\[
W_1(\mu,\nu)=\int_0^1 \bigl|Q_u(\mu)-Q_u(\nu)\bigr|\,du.
\]

Let $x^\star\coloneqq Q_\tau(\mu)$. By assumption, $F$ has a jump of size at least $\kappa$ at $x^\star$:
\[
L\coloneqq F(x^{\star-}),\qquad U\coloneqq F(x^\star),\qquad U-L\ge \kappa,\qquad \tau\in(L,U].
\]
Hence
\[
Q_u(\mu)\equiv x^\star\quad \text{for all }u\in(L,U].
\]

Let $v\coloneqq Q_\tau(\nu)$ and $d\coloneqq v-x^\star$. Monotonicity of $Q_\cdot(\nu)$ yields
\[
u\le \tau \ \Rightarrow\ Q_u(\nu)\le v,\qquad
u>\tau \ \Rightarrow\ Q_u(\nu)\ge v.
\]

\emph{Case $d\ge 0$.} For any $u\in(\tau,U]$, since $Q_u(\nu)\ge v\ge x^\star$,
\[
\bigl|x^\star-Q_u(\nu)\bigr|=Q_u(\nu)-x^\star\ \ge\ v-x^\star\ =\ |d|,
\]
so
\[
W_1(\mu,\nu)\ \ge\ \int_{\tau}^{U} \bigl|x^\star-Q_u(\nu)\bigr|\,du\ \ge\ (U-\tau)\,|d|.
\]

\emph{Case $d<0$.} For any $u\in(L,\tau)$, since $Q_u(\nu)\le v\le x^\star$,
\[
\bigl|x^\star-Q_u(\nu)\bigr|=x^\star-Q_u(\nu)\ \ge\ x^\star-v\ =\ |d|,
\]
and therefore
\[
W_1(\mu,\nu)\ \ge\ \int_{L}^{\tau} \bigl|x^\star-Q_u(\nu)\bigr|\,du\ \ge\ (\tau-L)\,|d|.
\]

Combining the two cases gives
\[
W_1(\mu,\nu)\ \ge\ \Big(\mathbf{1}_{\{d\ge 0\}}(U-\tau)+\mathbf{1}_{\{d<0\}}(\tau-L)\Big)\,|d|.
\]
Since $(\tau-L)+(U-\tau)=U-L\ge \kappa$, at least one of the two side lengths is $\ge \kappa/2$.
Thus
\[
W_1(\mu,\nu)\ \ge\ \frac{\kappa}{2}\,|d|
\quad\Rightarrow\quad
|d|\ \le\ \frac{2}{\kappa}\,W_1(\mu,\nu).
\]
Recalling $d=Q_\tau(\nu)-Q_\tau(\mu)$ finishes the proof.
\end{proof}

\begin{lemma}[Mixture total variation vs.\ mixing weights]\label{lem:tv-mixture}
Let $\{\mu_i\}_{i=1}^S$ be probability measures on $[0,H]$. For $p,p'\in\Delta^S$ define the mixtures
$\mu_p=\sum_{i=1}^S p_i\,\mu_i$ and $\mu_{p'}=\sum_{i=1}^S p'_i\,\mu_i$. Then
\[
\mathrm{TV}(\mu_p,\mu_{p'}) \;\le\; \tfrac12\,\|p-p'\|_1.
\]
\end{lemma}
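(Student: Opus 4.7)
The plan is to exploit the linearity of the mixture construction together with the dual (variational) form of total variation already recorded in Equation~\eqref{eq:TV}. First I would observe that $\mu_p-\mu_{p'}=\sum_{i=1}^S (p_i-p'_i)\,\mu_i$ as signed measures. Then for any measurable $f$ with $\|f\|_\infty\le 1$, integration against this signed measure gives $\int f\,d(\mu_p-\mu_{p'})=\sum_{i=1}^S (p_i-p'_i)\int f\,d\mu_i$.

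Next I would invoke the variational formula $\mathrm{TV}(\mu_p,\mu_{p'})=\tfrac12\sup_{\|f\|_\infty\le 1}\bigl|\int f\,d(\mu_p-\mu_{p'})\bigr|$ from Equation~\eqref{eq:TV}, apply the triangle inequality inside the supremum, and use that $\mu_i$ is a probability measure (so $|\int f\,d\mu_i|\le\|f\|_\infty\le 1$) to obtain
\begin{align*}
\mathrm{TV}(\mu_p,\mu_{p'})
\ \le\ \tfrac12\sup_{\|f\|_\infty\le 1}\sum_{i=1}^S |p_i-p'_i|\,\Bigl|\int f\,d\mu_i\Bigr|
\ \le\ \tfrac12\sum_{i=1}^S |p_i-p'_i|
\ =\ \tfrac12\|p-p'\|_1.
\end{align*}

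An equivalent route, if one prefers the set-based definition, is to split the index set into $I^{+}=\{i:p_i\ge p'_i\}$ and $I^{-}=\{i:p_i<p'_i\}$, note that $\sum_{i\in I^+}(p_i-p'_i)=\sum_{i\in I^-}(p'_i-p_i)=\tfrac12\|p-p'\|_1$ because both $p,p'\in\Delta^S$, and bound $\mu_p(A)-\mu_{p'}(A)$ from above using only the $I^+$ terms (since $\mu_i(A)\in[0,1]$) and from below using only the $I^-$ terms. Taking the supremum over $A$ yields the same factor of $\tfrac12$.

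There is essentially no obstacle here: the key quantitative input is that each $\mu_i$ has mass one, so $|\int f\,d\mu_i|\le 1$, and the factor $\tfrac12$ in front of the $\ell_1$ norm comes directly from the $\tfrac12$ in the variational definition of total variation (equivalently, from the cancellation $\sum_i(p_i-p'_i)=0$ that allows restricting attention to either $I^+$ or $I^-$). The only care needed is to work with the signed measure $\mu_p-\mu_{p'}$ rather than with each mixture separately, so that the bound does not pick up an extra factor of two.
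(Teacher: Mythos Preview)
Your proposal is correct and follows essentially the same argument as the paper: both use the variational formula \eqref{eq:TV}, write $\mu_p-\mu_{p'}=\sum_i (p_i-p'_i)\mu_i$, apply the triangle inequality, and bound $|\int f\,d\mu_i|\le 1$ since each $\mu_i$ is a probability measure. The alternative set-based route you sketch is also valid but is not needed here.
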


\begin{proof}[Proof of Lemma \ref{lem:tv-mixture}]
Recall Equation~\eqref{eq:TV}: for probability measures $\mu,\nu$,
\[
\mathrm{TV}(\mu,\nu)\;=\;\tfrac12\sup_{\|f\|_\infty\le 1}\Bigl|\int f\,d(\mu-\nu)\Bigr|.
\]
Write the signed measure difference of the two mixtures as
\[
\mu_p-\mu_{p'} \;=\; \sum_{i=1}^S (p_i-p'_i)\,\mu_i .
\]
For any measurable $f$ with $\|f\|_\infty\le 1$,
\[
\Bigl|\int f\,d(\mu_p-\mu_{p'})\Bigr|
\;=\;\Bigl|\sum_{i=1}^S (p_i-p'_i)\int f\,d\mu_i\Bigr|
\;\le\; \sum_{i=1}^S |p_i-p'_i|\,\Bigl|\int f\,d\mu_i\Bigr|.
\]
Since each $\mu_i$ is a probability measure and $\|f\|_\infty\le 1$,
\[
\Bigl|\int f\,d\mu_i\Bigr|\;\le\;\int |f|\,d\mu_i\;\le\;1.
\]
Therefore
\[
\Bigl|\int f\,d(\mu_p-\mu_{p'})\Bigr| \;\le\; \sum_{i=1}^S |p_i-p'_i|.
\]
Taking the supremum over all $\|f\|_\infty\le 1$ and multiplying by $\tfrac12$ yields
\(
\mathrm{TV}(\mu_p,\mu_{p'}) \le \tfrac12 \|p-p'\|_1,
\)
as claimed.
\end{proof}

\begin{lemma}[TV controls $W_1$ on \(\lbrack 0,H \rbrack\)]
\label{lem:w1-tv-lemma}
If $\mu,\nu$ are probability measures supported on $[0,H]$, then
\[
W_1(\mu,\nu)\ \le\ H\,\mathrm{TV}(\mu,\nu).
\]
\end{lemma}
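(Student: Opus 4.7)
The plan is to prove the inequality via the classical CDF representation of the $1$-Wasserstein distance on $\mathbb{R}$, combined with the elementary fact that the Kolmogorov (sup-CDF) distance is dominated by the total variation distance. Since both measures are supported on the bounded interval $[0,H]$, the difference of their CDFs vanishes outside $[0,H]$, so the CDF integral defining $W_1$ reduces to a length-$H$ integral. Bounding the integrand uniformly by $\mathrm{TV}(\mu,\nu)$ then yields the factor $H$ directly.

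Concretely, I would first recall the identity
\[
W_1(\mu,\nu)\;=\;\int_{-\infty}^{\infty}\bigl|F_\mu(x)-F_\nu(x)\bigr|\,dx,
\]
which is the standard dual of the quantile-based formula \eqref{eq:W1} obtained via Fubini on $|F_\mu^{-1}(u)-F_\nu^{-1}(u)|=\int_\mathbb{R}\mathbf{1}\{(F_\mu(x)\wedge F_\nu(x))<u\le F_\mu(x)\vee F_\nu(x)\}\,dx$. Since $\mu,\nu$ are supported on $[0,H]$, $F_\mu(x)=F_\nu(x)=0$ for $x<0$ and $F_\mu(x)=F_\nu(x)=1$ for $x\ge H$, so the integrand is zero outside $[0,H]$ and the integral collapses to $\int_0^H |F_\mu(x)-F_\nu(x)|\,dx$.

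Next I would use that for every $x\in\mathbb{R}$, the set $A_x=(-\infty,x]$ is measurable, so by the definition \eqref{eq:TV} of $\mathrm{TV}$,
\[
\bigl|F_\mu(x)-F_\nu(x)\bigr|\;=\;\bigl|\mu(A_x)-\nu(A_x)\bigr|\;\le\;\mathrm{TV}(\mu,\nu).
\]
Substituting this uniform bound under the integral sign gives
\[
W_1(\mu,\nu)\;\le\;\int_0^H \mathrm{TV}(\mu,\nu)\,dx\;=\;H\cdot \mathrm{TV}(\mu,\nu),
\]
which is the claim.

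There is no real obstacle here; the proof is essentially a one-line computation once the CDF formula for $W_1$ is in hand. The only care needed is to cite or briefly justify the CDF representation of $W_1$ (it is equivalent to \eqref{eq:W1} by a change of variables / Fubini) and to note that the support assumption is what truncates the integration domain to $[0,H]$. An equivalent coupling-based argument, using the maximal coupling that achieves $\mathbb{P}(X\neq Y)=\mathrm{TV}(\mu,\nu)$ together with the deterministic bound $|X-Y|\le H$ on $[0,H]$, would give the same inequality and could be used as an alternative if one prefers to avoid the CDF identity.
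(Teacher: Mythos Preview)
Your proof is correct, but it takes a genuinely different route from the paper. The paper argues via the Kantorovich--Rubinstein dual form in \eqref{eq:W1}: writing $\sigma=\mu-\nu=\sigma^+-\sigma^-$ as a Jordan decomposition with $\sigma^\pm([0,H])=\mathrm{TV}(\mu,\nu)=:m$, it bounds $\int f\,d\sigma\le(\sup f-\inf f)\,m$ for any bounded $f$, and then uses that a $1$-Lipschitz function on $[0,H]$ has oscillation at most $H$. You instead invoke the CDF integral representation $W_1=\int|F_\mu-F_\nu|$, truncate to $[0,H]$ by the support assumption, and bound the integrand pointwise by the Kolmogorov distance, which is at most $\mathrm{TV}$. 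Your argument is slightly more elementary and stays closer to the one-dimensional CDF/quantile machinery already set up around \eqref{eq:W1}; the paper's duality-plus-oscillation argument, on the other hand, generalizes verbatim to probability measures on any metric space of diameter $H$, not just subintervals of $\mathbb{R}$. The maximal-coupling alternative you mention at the end is yet a third valid proof and is perhaps the shortest of all.
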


\begin{proof}[Proof of Lemma \ref{lem:w1-tv-lemma}]
Let $\sigma\coloneqq \mu-\nu$ and write its Jordan decomposition
$\sigma=\sigma^-\!+\!(\sigma-\sigma^-)=\sigma^+-\sigma^-$ with
$\sigma^+([0,H])=\sigma^-([0,H])=\mathrm{TV}(\mu,\nu)\eqqcolon m$.
By the Kantorovich–Rubinstein duality (presented in Equation~\eqref{eq:W1}),
\[
W_1(\mu,\nu)=\sup_{\mathrm{Lip}(f)\le 1}\int f\,d\sigma.
\]
For any bounded $f$,
\[
\int f\,d\sigma=\int f\,d\sigma^+-\int f\,d\sigma^-
\le (\sup f)\,m-(\inf f)\,m=(\sup f-\inf f)\,m.
\]
If $\mathrm{Lip}(f)\le 1$ on $[0,H]$, then $\sup f-\inf f\le H$; hence
$W_1(\mu,\nu)\le H\,m=H\,\mathrm{TV}(\mu,\nu)$.
\end{proof}

\begin{lemma}[Auxiliary–uniform representation of the continuation mixture]\label{lem:aux-unif}
Fix episode $t$, step $h$, and the realized pair $(S_h^t,a_h)$ with $a_h=\pi_h^t(S_h^t)$.
Let $p^\star\coloneqq P_h^\star(\cdot\mid S_h^t,a_h)$ and condition on $\mathcal F_h^t$ so that
$S_{h+1}^t\sim p^\star$ is the only randomness going forward at step $h$.
For any kernel $P$ and policy $\pi^t$, define the continuation–mixture variable
\[
Z^{(P)}\ \coloneqq\ Z_{S_h^t,a_h,h}\!\bigl(p^\star;\,V^{\pi^t,P}_{\cdot,h+1}\bigr).
\]
Let $U\sim\mathrm{Unif}[0,1]$ be independent of $\mathcal F_h^t$ and of $S_{h+1}^t$.
Then, conditionally on $\mathcal F_h^t$,
\[
Z^{(P)}\ \stackrel{d}{=}\ V^{\pi^t,P}_{U,h+1}\!\bigl(S_{h+1}^t\bigr).
\]
\end{lemma}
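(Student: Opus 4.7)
The plan is to compute both sides' conditional CDF at an arbitrary $x\in\mathbb R$ and show they coincide. Throughout, we work conditionally on $\mathcal F_h^t$, so $(S_h^t,a_h)$ and the kernels $P^\star,P$ are frozen, and the only remaining randomness is $(S_{h+1}^t,U)$ with $S_{h+1}^t\sim p^\star$ independent of $U\sim\mathrm{Unif}[0,1]$.

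First, I would expand the right-hand side by conditioning on $S_{h+1}^t$ and using independence of $U$:
\[
\mathbb P\!\Bigl(V^{\pi^t,P}_{U,h+1}(S_{h+1}^t)\le x\ \Big|\ \mathcal F_h^t\Bigr)
\;=\;\sum_{i=1}^S p^\star_i\,\mathbb P\!\Bigl(V^{\pi^t,P}_{U,h+1}(s_i)\le x\Bigr).
\]
For each fixed $i$, the inner probability equals the Lebesgue measure of the level set $\{q\in[0,1]:\,V^{\pi^t,P}_{q,h+1}(s_i)\le x\}$, since $U$ is uniform.

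Next, I would identify this level set via Lemma~\ref{lem:right_identity}. Let $F_i$ denote the CDF of the $(h{+}1)$-to-$(H{-}1)$ return from $s_i$ under $(\pi^t,P)$; by the very definition of the QMDP quantile value, $V^{\pi^t,P}_{q,h+1}(s_i)=F_i^{-1}(q)$, the left-continuous quantile of $F_i$. Lemma~\ref{lem:right_identity} then gives
\[
\{q\in[0,1]:\,V^{\pi^t,P}_{q,h+1}(s_i)\le x\}
\;=\;\{q\in[0,1]:\,F_i^{-1}(q)\le x\}
\;=\;[0,F_i(x)].
\]
Moreover, the definition $\phi_i(x)\coloneqq\sup\{q\in[0,1]:\,V^{\pi^t,P}_{q,h+1}(s_i)\le x\}$ from Definition~\ref{def:Z} coincides with $F_i(x)$ by the same lemma. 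Therefore $\mathbb P(V^{\pi^t,P}_{U,h+1}(s_i)\le x)=F_i(x)=\phi_i(x)$.

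Combining the two displays,
\[
\mathbb P\!\Bigl(V^{\pi^t,P}_{U,h+1}(S_{h+1}^t)\le x\ \Big|\ \mathcal F_h^t\Bigr)
\;=\;\sum_{i=1}^S p^\star_i\,\phi_i(x),
\]
which is exactly the CDF of $Z^{(P)}=Z_{S_h^t,a_h,h}(p^\star;V^{\pi^t,P}_{\cdot,h+1})$ given in Definition~\ref{def:Z}. Since the two conditional CDFs agree for every $x\in\mathbb R$, the two random variables are equal in distribution conditionally on $\mathcal F_h^t$, completing the proof.

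The argument is essentially bookkeeping; the only non-trivial point is invoking Lemma~\ref{lem:right_identity} to translate between the two dual descriptions of $\phi_i$ (as a supremum over level sets of the quantile function versus as the underlying CDF $F_i$), which is precisely what makes the auxiliary uniform $U$ a valid coupling device.
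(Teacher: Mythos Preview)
Your proof is correct and follows essentially the same approach as the paper: both compute the conditional CDF of $V^{\pi^t,P}_{U,h+1}(S_{h+1}^t)$ by conditioning on $S_{h+1}^t$, invoke Lemma~\ref{lem:right_identity} to identify $\phi_i$ with the underlying return CDF $F_i$, and match the resulting mixture $\sum_i p^\star_i\,\phi_i(x)$ with Definition~\ref{def:Z}. The only cosmetic difference is that you phrase the inner step as ``Lebesgue measure of the level set $[0,F_i(x)]$'' whereas the paper writes it as $\mathbb P(F_i^{-1}(U)\le x)=\mathbb P(U\le F_i(x))$, which is the same computation.
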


\begin{proof}[Proof of Lemma~\ref{lem:aux-unif}]
Fix $P$ and abbreviate $V_{q,i}\coloneqq V^{\pi^t,P}_{q,h+1}(s_i)$.
For each next state $s_i$, let $F_i$ denote the CDF of the $(h{+}1)$–to–$(H{-}1)$ return under $(\pi^t,P)$
starting from $s_i$. By definition of the QMDP quantile map, $V_{q,i}=F_i^{-1}(q)$ where the inverse is left–continuous.

By Definition~\ref{def:Z}, the CDF of $Z^{(P)}$ (given $\mathcal F_h^t$) is
\[
\Phi_{p^\star}(t)\;=\;\sum_{i} p^\star_i\,\phi_i(t),
\qquad
\phi_i(t)\coloneqq \sup\{q\in[0,1]:\,V_{q,i}\le t\}.
\]
By Lemma~\ref{lem:right_identity}, for every $i$ we have
$\phi_i(t)=\sup\{q: F_i^{-1}(q)\le t\}=F_i(t)$.
Hence
\begin{equation}\label{eq:mixture-cdf}
\Phi_{p^\star}(t)\;=\;\sum_{i} p^\star_i\,F_i(t).
\end{equation}

Now consider $Y^{(P)}\coloneqq V^{\pi^t,P}_{U,h+1}(S_{h+1}^t)=F_{S_{h+1}^t}^{-1}(U)$.
Condition on the event $\{S_{h+1}^t=s_i\}$.
Since $U$ is independent and uniform, Lemma~\ref{lem:right_identity} gives
\[
\mathbb P\!\big(Y^{(P)}\le t\ \big|\ \mathcal F_h^t,\,S_{h+1}^t=s_i\big)
=\mathbb P\!\big(F_i^{-1}(U)\le t\big)
=\mathbb P\!\big(U\le F_i(t)\big)
=F_i(t).
\]
Taking the conditional expectation over $S_{h+1}^t\sim p^\star$ yields
\[
\mathbb P\!\big(Y^{(P)}\le t\ \big|\ \mathcal F_h^t\big)
=\sum_i p^\star_i\,F_i(t)
=\Phi_{p^\star}(t)
=\mathbb P\!\big(Z^{(P)}\le t\ \big|\ \mathcal F_h^t\big),
\]
where we used Equation~\eqref{eq:mixture-cdf} in the second equality and Definition~\ref{def:Z} in the last.
Thus $Y^{(P)}$ and $Z^{(P)}$ have the same conditional CDF given $\mathcal F_h^t$, i.e., the same conditional law. Therefore, conditionally on $\mathcal F_h^t$, $Z^{(P)} \stackrel{d}{=} V^{\pi^t,P}_{U,h+1}(S_{h+1}^t)$, as claimed.
\end{proof}
\end{document}